\newtheorem{theorem}{Theorem}[section]
\newtheorem{lemma}[theorem]{Lemma}
\newtheorem{definition}[theorem]{Definition}
\newtheorem{corollary}[theorem]{Corollary}
\newcommand{\wt}{\widetilde}
\newcommand{\eps}{\epsilon}
\newcommand{\R}{\mathbb{R}}
\renewcommand{\varepsilon}{\epsilon}
\renewcommand{\tilde}{\wt}
\renewcommand{\bar}{\overline}
\renewcommand{\eps}{\epsilon}
\newcommand{\bx}{\mathbf{x}}
\newcommand{\ba}{\mathbf{a}}
\newcommand{\by}{\mathbf{y}}
\newcommand{\bp}{\mathbf{p}}
\newcommand{\bt}{\mathbf{t}}
\newcommand{\bd}{\mathbf{d}}
\newcommand{\ob}{{\color{blue}{\langle}}}
\newcommand{\cb}{{\color{blue}{\rangle}}}
\DeclareMathOperator{\id}{id}
\newcommand{\fig}[1]{Figure~\ref{#1}}
\newcommand{\tbl}[1]{Table~\ref{#1}}
\DeclareMathOperator{\md}{model}
\DeclareMathOperator{\att}{Att}
\newcommand{\Dyck}{\mathsf{Dyck}}
\newcommand{\Dyckk}{\mathsf{Dyck}_{k}}
\newcommand{\D}{\mathsf{Dyck}_{k, D}}
\title{Self-Attention Networks Can Process Bounded Hierarchical Languages}
\author{{Shunyu Yao$^\dagger$ \quad Binghui Peng$^\ddagger$ \quad  Christos Papadimitriou$^\ddagger$ \quad Karthik Narasimhan$^\dagger$ } \\
$^\dagger$Princeton University \quad\quad  $^\ddagger$Columbia University \\
\texttt{\{shunyuy, karthikn\}@princeton.edu} \\
\texttt{\{bp2601, christos\}@columbia.edu} \\
}
\date{}
\begin{document}
\maketitle
\begin{abstract}
Despite their impressive performance in NLP, self-attention networks were recently proved to be limited for processing formal languages with hierarchical structure, such as $\Dyck_k$, the language consisting of well-nested parentheses of $k$ types. This suggested that natural language can be approximated well with models that are too weak for formal languages, or that the role of hierarchy and recursion in natural language might be limited. We qualify this implication by proving that self-attention networks can process $\Dyck_{k, D}$, the subset of $\Dyck_{k}$ with depth bounded by $D$, which arguably better captures the bounded hierarchical structure of natural language. Specifically, we {construct} a hard-attention network with $D+1$ layers and $O(\log k)$ memory size (per token per layer) that recognizes $\Dyck_{k, D}$, and a soft-attention network with two layers and $O(\log k)$ memory size that generates $\Dyck_{k, D}$. Experiments show that self-attention networks trained on $\Dyck_{k, D}$ generalize to longer inputs with near-perfect accuracy, and also verify the theoretical memory advantage of self-attention networks over recurrent networks.\footnote{Code is available at \url{https://github.com/princeton-nlp/dyck-transformer}.}

\end{abstract}

\section{Introduction}
\label{sec:intro}

Transformers~\cite{vaswani2017attention} are now the undisputed champions across several benchmark leaderboards in NLP. The major innovation of this architecture, \textit{self-attention}, processes input tokens in a distributed way, enabling efficient parallel computation as well as long-range dependency modelling. The empirical success of self-attention in NLP has led to a growing interest in studying its properties, with an eye towards a better understanding of the nature and characteristics of natural language \cite{tran2018importance,papadimitriou2020learning}. 

In particular, it was recently shown that self-attention networks \emph{cannot} process various kinds of formal languages~\cite{hahn2020theoretical,bhattamishra2020ability}, among which particularly notable is $\Dyck_{k}$, the language of well-balanced brackets of $k$ types. By the Chomsky-Sch{\"u}tzenberger Theorem~\cite{chomsky1959algebraic}, any context-free language can be obtained from a $\Dyck_{k}$ language through intersections with regular languages and homomorphisms. In other words, this simple language contains the essence of all context-free languages, i.e. hierarchical structure, center embedding, and recursion -- features which have been long claimed to be at the foundation of human language syntax~\cite{chomsky1956three}. 

Consider for example the long-range and nested dependencies in English subject-verb agreement:
\begin{center}
  \small 
\begin{dependency}[hide label, edge unit distance=.4ex]
\begin{deptext}[column sep=0.05cm]
 {\large \color{blue}(}Laws \&  {\large \color{blue}(}the lawmaker{\large \color{blue})} \& {\large \color{blue}[}writes{\large \color{blue}]} \&  {\large \color{blue}[}and revises{\large \color{blue}])}\&  {\large \color{blue}[}pass{\large \color{blue}]}. \\
\end{deptext}
\depedge{1}{5}{.}
\depedge{2}{3}{.}
\depedge{2}{4}{.}
\end{dependency}

\end{center}
The sentence structure is captured by $\Dyck_{2}$ string { \color{blue} (()[][])[]}. 
Given the state-of-the-art performance of Transformers in parsing natural language~\cite{zhang2020fast, he2019establishing}, the $\Dyck_{k}$ blind spot seems very suggestive. If the world's best NLP models cannot deal with this simple language --- generated by a grammar with $k+2$ rules and recognized by a single-state pushdown automaton ---  does this not mean that the role of hierarchy and recursion in natural language must be limited? This question has of course, been extensively debated by linguists on the basis of both theoretical and psycholinguistic evidence~\cite{hauser2002faculty,frank2012hierarchical,nelson2017neurophysiological,brennan2019hierarchical,frank2018hierarchical}.

\begin{figure*}[t]
    \centering
    \includegraphics[width=\textwidth]{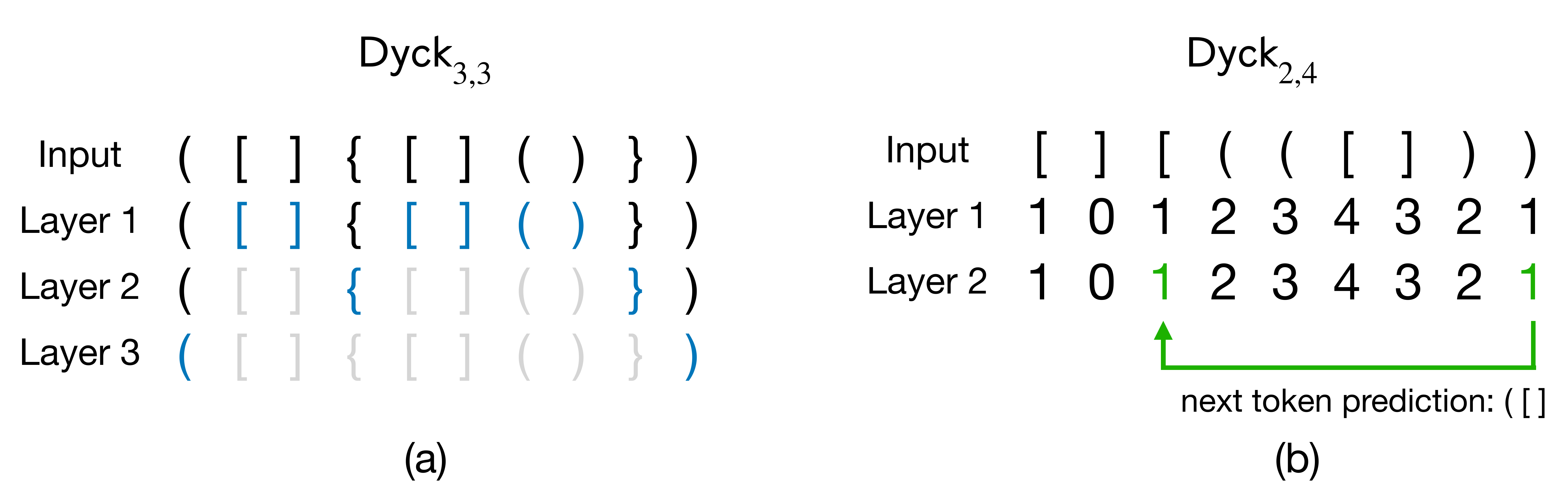}
    \caption{Illustrations of our self-attention network constructions to recognize and generate $\Dyck_{k, D}$. 
    In construction (a), at each layer, the innermost brackets attend to their matching brackets and ``cancel'' each other, yielding ``shallower'' spans for successive layers to process. 
    In construction (b), the first layer computes the depth of each token by attending to all previous tokens, while the second layer uses depth information to find 
    the most recent unclosed open bractket in the history.
    }
    \label{fig:algorithm}
\end{figure*}

So, what can self-attention networks tell us about natural language and recursion?  Here we provide a new twist to this question by considering $\Dyck_{k, D}$, the subset of $\Dyck_k$ with nesting depth at most $D$, and show that Transformers can process it.  $\Dyck_{k, D}$ models bounded (or finite) recursion, thus captures the hierarchical structure of human language much more realistically. For example, center-embedding depth of natural language sentences is known to rarely exceed three~\cite{karlsson2007constraints,jin2018unsupervised}, and while pragmatics, discourse, and narrative can result in deeper recursion in language~\cite{levinson2014pragmatics}, there is arguably a relatively small limit to the depth as well.  

In particular, we prove that self-attention networks can both recognize and generate $\Dyck_{k, D}$, with two conceptually simple yet different constructions (\fig{fig:algorithm}). The first network requires $D+1$ layers and a memory size of $O(\log k)$ (per layer per token) to recognize $\Dyck_{k, D}$, using a distributed mechanism of parenthesis matching. The second network has two layers and memory size $O(\log k )$. It works by attending to all previous tokens to count the depth for each token in the first layer, and then uses this depth information to attend to the most recent unclosed open bracket in the second layer. Our constructions help reconcile the result in \citet{hahn2020theoretical} with the success of Transformers in handling natural languages.

Our proof requires certain assumptions about the positional encodings, an issue that is often considered in empirical papers~\cite{ke2020rethinking,shaw2018self,wang2019encoding,shiv2019novel} but not in the more theoretical literature. 
First, positional encodings must have $\log n$ bits when the input length is $n$, as otherwise different positions would share the same representation. More importantly, positional encodings should support easy position comparisons, since token order is vital in formal language processing. Our experiments show that two standard practices, namely learnable or fixed sine/cosine positional encodings, cannot generalize well on $\Dyck_{k, D}$ beyond the training input lengths. In contrast, using a single fixed scalar monotonic positional encoding such as $\rm{pos}/n$ achieves near-perfect accuracy even on inputs significantly longer than the training ones. Our findings provide a novel perspective on the function of positional encodings, and implies that different applications of self-attention networks (in this case, natural vs.\,formal language) may require different model choices. 

Our theoretical results also bring about interesting comparisons to recurrent networks (e.g.\,RNNs, LSTMs) in terms of the resource need to process hierarchical structure. While recurrent networks with finite precision need at least $\Omega(D \log k)$ memory to process $\Dyck_{k, D}$~\cite{hewitt2020rnns}, our second construction requires only $O(\log k)$ memory but a $O(\log n)$ precision. 
In experiments where precision is not an issue for practical input lengths ($<10^4$), we confirm that a Transformer requires less memory than a LSTM to reach high test accuracies. This may help explain why Transformers outperform RNNs/LSTMs in syntactical tasks in NLP, and shed light into fundamental differences between recurrent and non-recurrent sequence processing.

\section{Related work}

Our work primarily relates to the ongoing effort of characterizing theoretical abilities~\cite{perez2019turing,bhattamishra2020computational,yun2019transformers} and limitations of self-attention networks, particularly through formal hierarchical structures like $\Dyck_k$. \citet{hahn2020theoretical} proves that (even with positional encodings) hard-attention Transformers cannot model $\Dyck_k$, and soft-attention Transformers with bounded Lipschitz continuity cannot model $\Dyck_k$ with perfect cross entropy.
\citet{bhattamishra2020ability} prove a soft-attention network with positional masking (but no positional encodings) can solve $\Dyck_{1}$ but not $\Dyck_2$. 
Despite the \emph{expressivity} issues theoretically posed by the above work, empirical findings have shown Transformers can \emph{learn} $\Dyckk$ from finite samples and outperform LSTM~\cite{ebrahimi2020can}.
Our work addresses the theory-practice discrepancy by using positional encodings and modeling $\D$.

A parallel line of work with much lengthier tradition~\cite{elman1990finding,das1992learning,steijvers1996recurrent} investigates the abilities and limitations of recurrent networks to process hierarchical structures. In particular, RNNs or LSTMs are proved capable of solving context-free languages like $\Dyckk$ given infinite precision~\cite{korsky2019computational} or external memory~\cite{suzgun2019memory,merrill2020formal}. However, \citet{merrill2020formal} also prove RNNs/LSTMs cannot process $\Dyckk$ without such assumptions, which aligns with experimental findings that recurrent networks perform or generalize poorly on $\Dyckk$~\cite{bernardy2018can,sennhauser2018evaluating,yu2019learning}. \citet{hewitt2020rnns} propose to consider $\D$ as it better captures natural language, and show finite-precision RNNs can solve $\D$ with $\Theta(D \log k)$ memory.

For the broader NLP community, our results also contribute to settling whether self-attention networks are restricted to model hierarchical structures due to non-recurrence, a concern~\cite{tran2018importance} often turned into proposals to equip Transformers with recurrence~\cite{dehghani2018universal,shen2018disan,chen2018best,hao2019modeling}. On one hand, Transformers are shown to encode syntactic~\cite{lin2019open,tenney2019bert,manning2020emergent} and word order~\cite{yang2019assessing} information, and dominate syntactical tasks in NLP such as constituency~\cite{zhang2020fast} and dependency~\cite{he2019establishing} parsing. On the other hand, on several linguistically-motivated tasks like English subject-verb agreement~\cite{tran2018importance}, recurrent models are reported to outperform Transformers. 
Our results help address the issue by confirming that distributed and recurrent sequence processing can both model hierarchical structure, albeit with different mechanisms and tradeoffs.

\section{Preliminaries}
\label{sec:pre}

\subsection{Dyck Languages}
Consider the vocabulary of $k$ types of open and close brackets $\Sigma = \cup_{i\in[k]} \{ \ob_i, \cb_i \}$, and define $\text{Dyck}_k \subset  \gamma \Sigma^* \omega$ ($\gamma, \omega$ being special start and end tokens) to be the formal language of well-nested brackets of $k$ types. It is generated starting from $\gamma X\omega$ through the following context-free grammar:
\begin{align}
    X \to \epsilon \ | \ \ob_i \ X \ \cb_i \ X \quad  (i \in [k])
\end{align}
where $\epsilon$ denotes the empty string. 

Intuitively, $\Dyckk$ can be recognized by sequential scanning with a stack (i.e., a pushdown automaton). Open brackets are pushed into the stack, while a close bracket causes the stack to pop, and the popped open bracket is compared with the current close bracket (they should be of the same type). The {\em depth} of a string $w_{1:n}$ at position $i$ is the stack size after scanning $w_{1:i}$, that is, the number of open brackets left in the stack:
\begin{equation}
    d(w_{1:i}) = \mathsf{count}(w_{1:i}, \ob) - \mathsf{count}(w_{1:i}, \cb)
\end{equation}

Finally, we define $\Dyck_{k, D}$ to be the subset of $\Dyck_{k}$ strings with depth bounded by $D$:
\begin{equation*}
   \D= \left \{  w_{1:n} \in \text{Dyck}_k \left | \\  \max_{i \in [n]} d(w_{1:i}) \le D \right .\right\}
\end{equation*}
That is, a string in $\D$ only requires a stack with bounded size $D$ to process.

\subsection{Self-attention Networks}
We consider the encoder part of the original Transformer~\cite{vaswani2017attention}, which has multiple layers of two blocks each: (i) a self-attention block and (ii) a feed-forward network (FFN). For an input string $w_{1:n} \in \Sigma^*$, each input token $w_i$ is converted into a token embedding via $f_e: \Sigma \to \R^{d_{\md}}$, then added with a position encoding $\bp_i \in \R^{d_{\md}}$. Let $\bx_{i, \ell} \in \R^{d_{\md}}$ be the $i$-th representation of the $\ell$-th layer ($i\in[n], \ell\in[L]$). Then
\begin{align}
    \bx_{i, 0} &= f_{e}(w_i) + \bp_i \\
    \ba_{i, \ell} &= \att_\ell(Q_\ell(\bx_i), K_\ell(\bx), V_\ell(\bx)) \\
    \bx_{i, \ell+1} &= F_\ell(\ba_{i, \ell}) 
\end{align}

\paragraph{Attention} In each head of a self-attention block, the input vectors $\bx_{1:n}$ undergo linear transforms $Q, K, V$ yielding query, key, and value vectors. They are taken as input to a self-attention module, whose $t$-th output, $\att(Q\bx_i, K\bx, V\bx)$, is a vector $\ba_i = \sum_{j \in [T]} \alpha_j V\bx_j$, where $\alpha_{1:n} = \text{softmax}(\langle Q\bx_i, K\bx_1 \rangle, \cdots,  \langle Q\bx_i, K\bx_n \rangle)$. 
The final attention output is the concatenation of multi-head attention outputs.
We also consider variants of the basic model along these directions:  

(i) \emph{Hard attention}, as opposed to \emph{soft attention} described above, where hardmax is used in place for softmax (i.e.\, $\att(Q\bx_i, K\bx, V\bx) = V \bx_{j'}$ where $j' = \arg \max_j \langle Q\bx_i, K\bx_j \rangle$). 
Though impractical for NLP, it has been used to model formal languages~\cite{hahn2020theoretical}.

(ii) \emph{Positional masking}, where $\alpha_{1:i}$ (past) or $\alpha_{i:n}$ (future) is masked for position $i$. Future-positional masking is usually used to train auto-regressive models like GPT-2~\cite{radford2019language}.  %

\paragraph{Feed-forward network} A feed-forward network $F$ transforms each self-attention output vector $\ba_i \to F(\ba_i)$ individually. It is usually implemented as a multi-layer perceptron (MLP) with ReLU activations. %
\emph{Residual connections}~\cite{he2016deep} and \emph{layer normalization}~\cite{ba2016layer} are two optional components to aid learning.  %

\paragraph{Positional encodings} \citet{vaswani2017attention} proposes two kinds of positional encoding:
(i) Fourier features~\cite{rahimi2008random}, i.e.\,sine/cosine values of different frequencies; (ii) learnable features for each position. In this work we propose to use a single scalar $i/n$ to encode position $i \in [n]$, and show that it helps process formal languages like $\D$, both theoretically and empirically.

\paragraph{Precision and memory size} We define \emph{precision} to be the number of binary bits used to represent each scalar, and \emph{memory size per layer} ($d_{\md}$) to be the number of scalars used to represent each token at each layer. The \emph{memory size} ($L \cdot d_{\md}$) is the total memory used for each token. 

\subsection{Language Generation and Recognition}
For a Transformer with $L$ layers and input $w_{1:i}$, we can use a decoder (MLP + softmax) on the final token output $\bx_{i, L}$ to predict $w_{i+1}$. This defines a \emph{language model} $f_\theta(w_{i+1}|w_i)$ where $\theta$ denotes Transformer and decoder parameters. We follow previous work~\cite{hewitt2020rnns} to define how a language model can generate a formal language:

\begin{definition}[Language generation]
\label{def:generation}
Language model $f_{\theta}$ over $\Sigma^{\star}$ generates a language $\mathcal{L} \subseteq \Sigma^{\star}$ if there exists $\eps > 0$ such that 
$
\mathcal{L} = \{w_{1:n} \in \Sigma^{\star} \ | \ \forall i \in [n], f_{\theta}(w_i | w_{1:i-1}) \geq \eps\}
$.
\end{definition}

We also consider language recognition by a \emph{language classifier} $g_\theta(w_{1:i})$, where a decoder on $\bx_{i, L}$ instead predicts a binary label.

\begin{definition}[Language recognition]
\label{def:recognition}
Language classifier $g_{\theta}$ over $\Sigma^{\star}$ recognizes a language $\mathcal{L} \subseteq \Sigma^{\star}$ if 
$\mathcal{L} = \{w_{1:n} \in \Sigma^{\star} \ | g_\theta(w_{1:n}) = 1\}$.
\end{definition}

\section{Theoretical Results}
\label{sec:result}

In this section we state our theoretical results along with some remarks. Proof sketches are provided in the next section, and details in Appendix~\ref{sec:app-construction1},\ref{sec:app-construction2},\ref{sec:hard}.

\begin{theorem}[Hard-attention, $\D$ recognition]
\label{thm:lg1}
For all $k, D \in \mathbb{N}^{+}$, there exists a $(D+1)$-layer hard-attention network that can recognize $\D$. It uses both future and past positional masking heads, positional encoding of the form $i/n$ for position $i$, $O(\log k)$ memory size per layer, and $O(\log n)$ precision, where $n$ is the input length.
\end{theorem}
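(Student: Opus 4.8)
The plan is to implement, layer by layer, a distributed version of the classical stack algorithm: at each layer the network identifies every ``innermost'' matched pair among the brackets not yet processed, verifies that its two brackets have the same type, and then marks both as inactive, so that the next layer sees a Dyck string of depth one smaller. Concretely, call a bracket \emph{active at layer} $\ell$ if it has not been cancelled in layers $1,\dots,\ell$; initially every bracket is active. In layer $\ell$, an \emph{open} bracket at position $i$ is cancelled iff the nearest active token to its right is a close bracket, and a \emph{close} bracket at position $i$ is cancelled iff the nearest active token to its left is an open bracket; in either case the two brackets involved form a matched pair, and we check that their types agree. I would first prove the structural lemma underlying this: if $w$ is a nonempty Dyck string of depth $d$, then one round of this cancellation yields a Dyck string of depth \emph{exactly} $d-1$, and moreover every cancelled pair consists of true matching partners. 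The key observations are: (a) every nonempty Dyck string contains an adjacent pair $\ob\cb$ (the deepest one), so the deepest pairs are always cancelled and the depth drops by at least one; (b) the set of cancelled pairs is an antichain in the forest of matched pairs --- a pair and one of its ancestors cannot both be cancelled in the same round, since the ancestor's interior still contains an active bracket --- so every surviving node loses at most one unit of depth; and (c) once an open bracket's entire subtree has been cancelled, its nearest active right-neighbour is exactly its own matching close bracket, which is why cancellation never mismatches partners. For an input \emph{not} in $\D$ --- either not well-nested, or of depth exceeding $D$ --- I would argue that after $D$ rounds either some bracket is still active or some type-agreement check has failed.

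The second step is to realize one round as a single hard-attention layer. Each token carries: its bracket type, binary-encoded in $O(\log k)$ scalars; one-bit flags for open/close, active, and a persistent ``error'' bit; and its scalar positional encoding $i/n$. To find the nearest active token to the right of position $i$ I use one head, positionally masked to attend only to later positions, with attention score from $i$ to $j$ equal to $C \cdot \mathbbm{1}[j \text{ is active}] - j/n$ for a constant $C > 1$: among active $j$ this is maximized at the smallest such $j$, and every active $j$ beats every inactive $j$, so the hardmax returns exactly the desired token, from which we read off its open/close flag, type, and active flag. A symmetric head, masked to attend only to earlier positions, finds the nearest active token to the left; these are the two kinds of positional masking the theorem allows. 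A constant-size ReLU feed-forward network then computes, per token, whether it is cancelled this round (a Boolean combination of the flags just retrieved), OR's a type mismatch into the error bit whenever a cancellation occurs, and clears the active bit; inactive tokens pass their state through unchanged. Stacking $D$ such layers drives every bracket of an in-language string to inactive. A final $(D{+}1)$-st layer lets the end token $\omega$, at position $n$, use a head attending to all earlier positions --- which is all of $w_{1:n-1}$, hence every bracket --- to detect whether any token is still active or has its error bit set, and the decoder on $\bx_{n,L}$ outputs $1$ exactly when neither happens. One then checks that the resulting classifier recognizes $\D$, that each layer uses only the two masked head types, that the per-layer memory is $O(\log k)$ scalars, and that $O(\log n)$ bits of precision suffice to store and compare the $i/n$ values.

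The main obstacle, and the part I would treat most carefully, is twofold. First, the structural reduction lemma must be proved with the ``exactly $d-1$'' and ``true partners'' claims fully nailed down, including the behaviour on ill-formed inputs; a careless argument here would let a non-Dyck string slip through. Second, the ``nearest active token'' gadget has to be engineered so that the activeness bit strictly dominates the score while the fractional term $i/n$ still breaks ties in the correct direction, and so that the whole comparison is exact at $O(\log n)$ precision --- in particular the constant $C$, the scaling of $i/n$, and the degenerate case where no active token lies on the relevant side (which should simply leave the bracket uncancelled, hence eventually rejected) all need to be pinned down. The remaining pieces --- encoding the Boolean updates in a ReLU network, propagating inactive states unchanged, and the final aggregation at $\omega$ --- are routine.
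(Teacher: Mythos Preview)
Your proposal is correct and takes essentially the same approach as the paper: iteratively cancel innermost pairs by having each token use hard attention (keyed on the positional scalar plus an active/matched bit) to locate its nearest uncancelled neighbor on each side, update match and error bits via a Boolean ReLU FFN, and aggregate in a final layer. The only cosmetic difference is that the paper adds a third ``identity'' head so the FFN can read the token's own state without relying on a residual connection, and it does not spell out the depth-reduction lemma you plan to prove, taking it as evident.
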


\begin{theorem}[Soft-attention, $\D$ generation]
\label{thm:lg2}
For all $k, D \in \mathbb{N}^{+}$, there exists a 2-layer soft-attention network that can generate $\D$. It uses future positional masking, positional encoding of form $i/n$ for position $i$, $O(\log k)$ memory size per layer, and $O(\log n)$ precision, where $n$ is the input length. The feed-forward networks use residual connection and layer normalization. 
\end{theorem}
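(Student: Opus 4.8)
The plan is to implement the two-layer architecture of Figure~\ref{fig:algorithm}(b): the first layer computes, at every position and in a way that does \emph{not} depend on that position's index, a code for the current stack depth; the second layer uses these codes to attend to the most recent unclosed open bracket; and a decoder reads off the current depth together with that bracket's type to output the set of legal next tokens, each with probability at least a fixed $\eps$. Throughout, to predict $w_i$ the network sees $w_{1:i-1}$ (write $m=i-1$), all attention is future-masked, and position $j\le m$ carries the positional scalar $j/m$.

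\emph{Layer 1.} At position $j$ I would take two uniform-attention heads: one whose value is $+1$ on open brackets, $-1$ on close brackets and $0$ on $\gamma$, and one whose value is $\mathbf 1[w_l=\gamma]$. Because a softmax over equal logits is \emph{exactly} uniform, the outputs are \emph{exactly} $d(w_{1:j})/j$ and $1/j$, so layer~1 introduces no approximation error. The post-attention sub-vector at position $j$ is therefore $\tfrac1j\cdot(d(w_{1:j}),\,1,\dots)$, a fixed vector-valued function of $d(w_{1:j})$ rescaled by the unknown factor $1/j$. This is exactly what layer normalization is for: $\mathrm{LN}$ is invariant under positive rescaling, so $\mathrm{LN}\!\big(\tfrac1j(d(w_{1:j}),1,\dots)\big)=\mathrm{LN}\!\big((d(w_{1:j}),1,\dots)\big)$, which depends only on $d(w_{1:j})\in\{0,\dots,D\}$. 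Wiring the residual branch so that the vector reaching $\mathrm{LN}$ is genuinely $1/j$-homogeneous, I obtain at \emph{every} position a position-independent code taking one of $D{+}1$ distinguishable values, from which an FFN extracts the indicators $\mathbf 1[d=0]$ and $\mathbf 1[d=D]$ and which I also keep for layer~2, alongside a copy of $w_j$'s type.

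\emph{Layer 2 and decoder.} The legal continuations of $w_{1:m}$ depend on $\delta:=d(w_{1:m})$ and, when $\delta\ge1$, on the type of the most recent unclosed open bracket, which is the largest $j\le m$ with $w_j$ open and $d(w_{1:j})=\delta$. Since the depth codes are position-independent and discrete, I can choose the layer-2 query at position $m$ (encoding $\delta$) and the keys at positions $j$ (encoding $d(w_{1:j})$, $\mathbf 1[w_j\text{ open}]$, and $j/m$) so that the attention logit at $j$ equals, up to a multiplicative $\Theta(\mathrm{poly}(n))$, the product $\mathbf 1[d(w_{1:j})=\delta]\cdot\mathbf 1[w_j\text{ open}]$ plus a much smaller $\Theta(j/m)$ term breaking ties toward the latest such $j$; soft attention then concentrates on $j^{\star}$ up to $1/\mathrm{poly}(n)$ error, and the pulled-back value is a code for the type $t^{\star}$ of the bracket to close. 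A final MLP$+$softmax on position $m$'s output then assigns probability $\ge\eps$ exactly to: $\omega$ and every $\ob_i$ if $\delta=0$; every $\ob_i$ and $\cb_{t^{\star}}$ if $1\le\delta\le D-1$; $\cb_{t^{\star}}$ only if $\delta=D$; and to $\gamma$ alone after the empty prefix. For a fixed $\eps=\eps(k)$, this realizes the legal-continuation relation of $\D$ under the usual convention that $\gamma,\omega$ delimit strings, and a short induction over prefixes yields $\D=\{w_{1:n}:\forall i,\ f_\theta(w_i\mid w_{1:i-1})\ge\eps\}$, which is Definition~\ref{def:generation}. Counting coordinates gives $O(\log k)$ memory (a depth code of $O(1)$ bits, a type code of $O(\log k)$ bits, the positional scalar), and $O(\log n)$ precision is needed only for $j/m$ and the $\mathrm{poly}(n)$ attention temperature.

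\emph{Main obstacle.} The delicate point is that soft attention inherently returns a convex combination, hence a quantity normalized by the prefix length, so the raw depths read off at different positions are \emph{not} directly comparable --- yet the matching step needs exactly such a comparison. Exploiting the scale-invariance of layer normalization to cancel the $1/j$ factor, and in particular arranging the residual and FFN so that the vector reaching the normalization is exactly $1/j$-homogeneous rather than contaminated by the $\Theta(1)$-scale token embedding, is the crux of the construction. The remaining work is quantitative: verifying that the near-hard (but genuinely soft) selection attention of layer~2, together with $O(\log n)$ precision, still leaves a uniform $\eps$-gap between legal and illegal next tokens for \emph{every} input length $n$.
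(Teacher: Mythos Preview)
Your proposal is correct and follows essentially the same two-layer mechanism as the paper: uniform soft attention in layer~1 to obtain $d_i/i$ (and $1/i$), layer normalization to cancel the position-dependent $1/i$ factor and produce a discrete depth code, and a near-hard depth-matching head in layer~2 that uses this code together with the positional scalar as a tie-breaker to locate the stack-top open bracket. The only substantive addition in the paper is a concrete choice of code, namely $\bd_i=(\cos\theta(d_i),\sin\theta(d_i))$ with $\theta(d)=\arctan\!\big(d/(D{+}2{-}d)\big)$ (Lemma~\ref{lem:depth-transform}), which makes the layer-2 inner-product gap $\bd_i\cdot\bd_j\le 1-\Omega(1/D^2)$ for $d_i\ne d_j$ explicit and thereby pins down the required attention scaling and the $O(\log n)$ precision.
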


\begin{theorem}[Precision lower bound]
\label{thm:lower}
For all $k \in \mathbb{N}^+$, no hard-attention network with $o(\log n)$ precision can recognize $\Dyck_{k, 2}$ where $n$ is the input length.
\end{theorem}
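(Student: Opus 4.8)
The plan is to show that an $o(\log n)$-precision hard-attention network essentially cannot tell certain pairs of positions apart, and then to feed it two strings — one in $\Dyck_{k,2}$, one not — that differ only by a transposition of two such positions. Since the architecture is fixed, its memory size per layer and number of layers are $O(1)$, so with precision $p(n)=o(\log n)$ the total number of distinct vectors it can ever store is $2^{O(p(n))}=2^{o(\log n)}=n^{o(1)}$. In particular, among the $n-2$ bracket positions of a length-$n$ input the position encodings take at most $n^{o(1)}<n$ distinct values once $n$ is large, so by pigeonhole there are bracket positions $i<j$ with $\bp_i=\bp_j$; taking $i$ smallest and $j$ largest among those sharing a common encoding, we may also assume $j-i\ge 2$ (for large $n$ a common encoding is shared by $n^{1-o(1)}$ positions).

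The second step builds the fooling pair. Using only one bracket type (so the construction works for every $k\ge 1$), I would construct $w\in\Dyck_{k,2}$ of length exactly $n$ in which $w_i$ is an open bracket, $w_j$ is a close bracket, and the prefix depths are arranged so that swapping these two tokens forces a violation: concretely one wants either $d(w_{1:i})=1$, so the swap makes $d(w'_{1:i})=-1$, or $w_i,w_{i+1}$ an open–close pair at depths $2,1$, so the swap makes $d(w'_{1:i+1})=-1$; the remaining positions are padded with blocks consisting of a matched pair and at most one nested pair to satisfy the length-$n$ and depth-$\le 2$ constraints. A short case analysis on the parities of $i$ and $j$ — exactly where the slack of depth $2$ rather than depth $1$ is used — shows such a $w$ always exists. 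Let $w'$ be $w$ with the tokens at $i$ and $j$ swapped; then $w'\notin\Dyck_k\supseteq\Dyck_{k,2}$, whereas $w\in\Dyck_{k,2}$. Since $\bp_i=\bp_j$, the string $w'$ is obtained from $w$ by applying the transposition $\tau=(i\ j)$ to the sequence of layer-$0$ representations $\bx_{\cdot,0}$.

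The third step pushes this symmetry through every layer: I claim $\bx^{w'}_{c,\ell}=\bx^{w}_{\tau(c),\ell}$ for all positions $c$ and layers $\ell$, by induction on $\ell$. Granting the claim at layer $\ell$, fix a query position $c$ in $w'$; its query vector equals that of position $\tau(c)$ in $w$, and its list of key/value vectors is the corresponding list for $w$ reindexed by $\tau$, so the hard-attention argmax at $c$ in $w'$ is $\tau$ applied to the argmax at $\tau(c)$ in $w$, and the value vector selected is the one selected by position $\tau(c)$ in $w$. Hence $\ba^{w'}_{c,\ell}=\ba^{w}_{\tau(c),\ell}$, and applying the position-wise feed-forward block (which commutes with $\tau$) gives $\bx^{w'}_{c,\ell+1}=\bx^{w}_{\tau(c),\ell+1}$. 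Taking $\ell=L$ and $c=n$, and noting $\tau$ fixes $n$ since $i,j$ are interior bracket positions, we get $\bx^{w'}_{n,L}=\bx^{w}_{n,L}$, so the classifier returns the same label on $w$ and $w'$ — contradicting recognition of $\Dyck_{k,2}$.

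The main obstacle is the argmax step in the induction when there are ties, which low precision in fact makes the typical case: a fixed tie-breaking rule such as ``smallest index'' need not commute with $\tau$. I would handle this by exploiting the abundance of collisions — a common encoding is shared by $n^{1-o(1)}$ positions — to pick $i$ and $j$ generically, so that for every query position outside $\{i,j\}$ the argmax is unique and distinct from both $i$ and $j$ (alternatively, by a symmetrisation argument: any set of tied positions with equal representation contributes the same value regardless of the rule). A secondary point is positional masking, under which $i$ and $j$ are no longer interchangeable for queries lying strictly between them; this is addressed by choosing the probe positions and the padding so that the masked-out region carries no information distinguishing $w$ from $w'$ at the read-out position, or by reducing to the unmasked case.
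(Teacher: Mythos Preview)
Your fooling-pair approach is clean in spirit, but the step you call ``secondary''---positional masking---is in fact where the argument breaks. The model class in Section~\ref{sec:pre} explicitly allows future- and past-masked heads (and Theorem~\ref{thm:lg1} uses both), so the lower bound must cover them. Your inductive claim $\bx^{w'}_{c,\ell}=\bx^{w}_{\tau(c),\ell}$ requires that the set of keys visible to $c$ in $w'$ agree, up to $\tau$, with the set visible to $\tau(c)$ in $w$. With future masking this already fails at $c=i$: position $i$ in $w'$ sees $\{1,\dots,i\}$, while $\tau(i)=j$ in $w$ sees the strictly larger $\{1,\dots,j\}$; if the argmax for $j$ in $w$ lands at some $m^*\in(i,j)$, position $i$ in $w'$ cannot select it and returns a different value. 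The same obstruction hits every $c$ with $i<c<j$, since $\tau$ fixes $c$ but does not preserve $\{1,\dots,c\}$. Your suggested remedies (choose $i,j$ so the intervening region is harmless; reduce to the unmasked case) are not worked out, and once intermediate representations diverge they can propagate to the read-out through later layers. The tie-breaking issue you flag as the ``main obstacle'' is also not resolved: tied keys need not have tied values (since $K,V$ are different projections), so ``equal representation'' among tied positions is not guaranteed, and choosing $i,j$ to make all argmaxes unique is circular because the argmaxes depend on the very input you are constructing.

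The paper's proof takes a different route that handles both issues at once. Instead of a single transposition, it shows layer by layer that one can \emph{fix} $o(n)$ input positions (in a ``well-aligned'' way compatible with $\Dyck_{k,2}$) so that every attention head---including masked ones---always attends to an already-fixed position; hence each $\bx_{i,\ell}$ depends only on the symbol at $i$. The number of positions fixed is $H^{O(L)}(k+1)^{O(L^2H)}2^{O(L^2Hp)}$, which is $o(n)$ when $p=o(\log n)$, leaving free positions that the network provably cannot inspect; any error planted there goes undetected. The fixing procedure scans a priority list per head and per possible input value, treating forward- and backward-masked heads separately, so masking and tie-breaking are absorbed into the construction rather than fought against.
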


\paragraph{Required precision}  
Both constructions require a precision increasing with input length, as indicated by Theorem~\ref{thm:lower}. The proof of the lower bound is inspired by the proof in \citet{hahn2020theoretical}, but several technical improvements are necessary; see Appendix~\ref{sec:hard}. Intuitively, a vector with a fixed dimension and $o(\log n)$ precision cannot even represent $n$ positions uniquely. The required precision is not unreasonable, since $\log n$ is a small overhead to the $n$ tokens the system has to store.

\paragraph{Comparison to recurrent processing}
\citet{hewitt2020rnns} constructs a 1-layer RNN to generate $\D$ with $\Theta(D\log k)$ memory, and proves it is optimal for any recurrent network. 
Thus Theorem~\ref{thm:lg2} establishes a memory advantage of self-attention networks over recurrent ones.
However, this is based on two tradeoffs:
(i) \emph{Precision}. \citet{hewitt2020rnns} assumes $O(1)$ precision while we require $O(\log n)$. 
(ii) \emph{Runtime}. Runtime of recurrent and self-attention networks usually scale linearly and quadratically in $n$, respectively.

\paragraph{Comparison between two constructions}
Theorem~\ref{thm:lg2} requires fewer layers ($2$ vs.\,$D$) and memory size ($O(\log k)$ vs.\,$O(D\log k)$) than Theorem~\ref{thm:lg1}, thanks to the use of soft-attention, residual connection and layer normalization. 
Though the two constructions are more suited to the tasks of recognition and generation respectively (Section~\ref{sec:construction}), each of them can also be modified for the other task.

\begin{figure*}[ht]
    \centering
    \includegraphics[width=\textwidth]{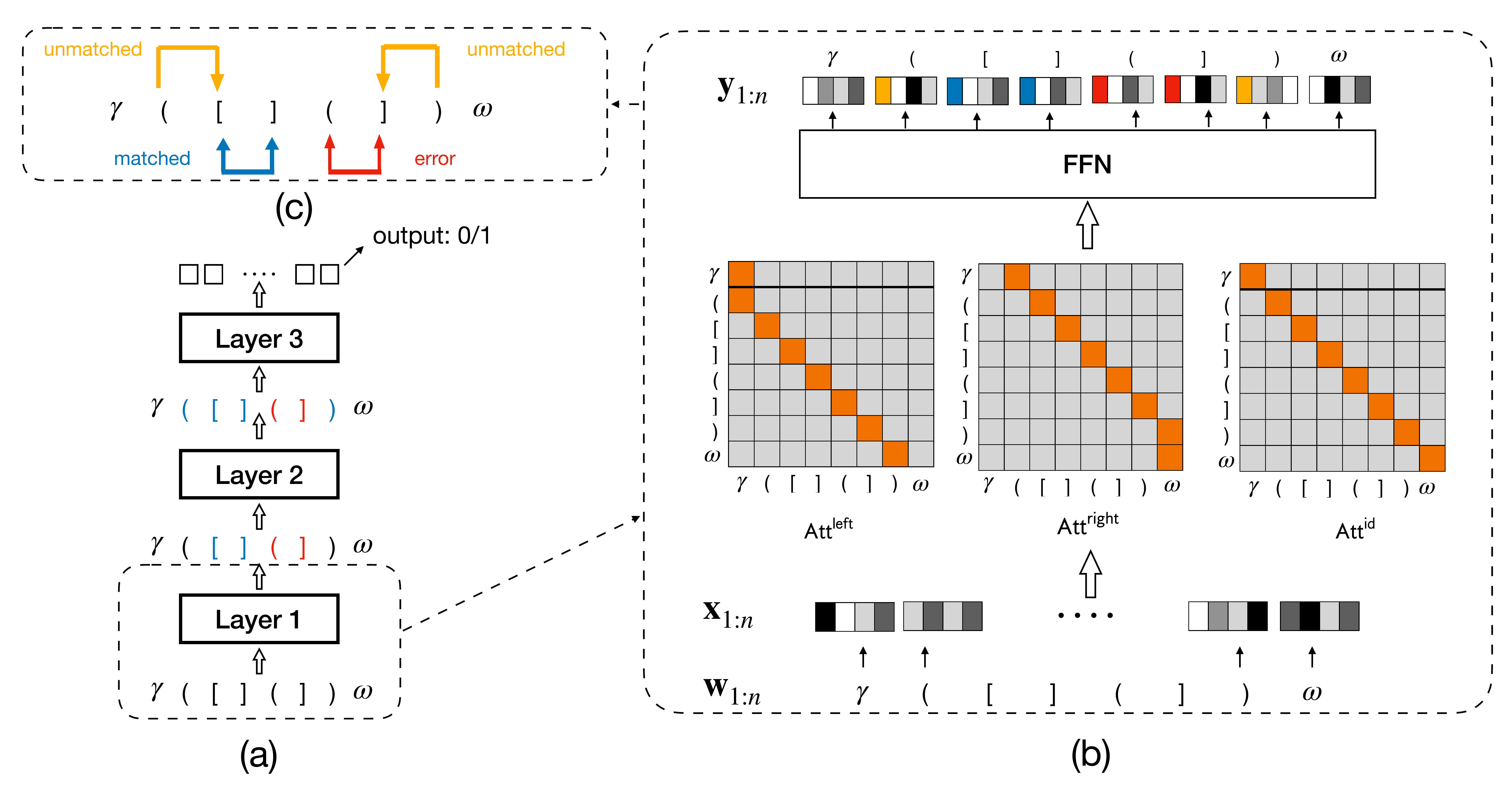}
    \caption{Our construction for Theorem~\ref{thm:lg1}. (a) The network has multiple identical layers to match brackets and detect errors. (b) Each layer consists of three hard-attention heads so that a token attends to itself and the nearest unmatched tokens on both sides, and uses representations from these positions to update its state. (c) Each position can be in three states: matched, error, or unmatched.}
    \label{fig:rec}
\end{figure*}

\paragraph{Connection to $\Dyckk$} 
In \citet{hahn2020theoretical} it is shown that no hard-attention network can recognize $\Dyckk$ even for $k=1$. Theorem~\ref{thm:lg1} establishes that this impossibility can be circumvented by bounding the depth of the Dyck language. \citet{hahn2020theoretical} also points out soft-attention networks can be limited due to bounded Lipschitz continuity. 
In fact, our Theorem~\ref{thm:lg2} construction can also work on $\Dyckk$ with some additional assumptions (e.g.\,feed $n$ also in input embeddings), and we circumvent the impossibility by using laying normalization, which may have an $O(n)$ Lipschitz constant. More details are in Appendix~\ref{subsec:app-dyckk}.

\section{Constructions}
\label{sec:construction}

\subsection{$(D+1)$-layer Hard-Attention Network}
\label{subsec:1}
Our insight underlying the construction in Theorem~\ref{thm:lg1} is that, by recursively removing matched brackets from innermost positions to outside, each token only needs to attend to nearest unmatched brackets to find its matching bracket or detect error within $D$ layers.
Specifically, at each layer $\ell \le D$, each token will be in one of three states (\fig{fig:rec}~(c)):
(i) \emph{Matched}, 
(ii) \emph{Error}, 
(iii) \emph{Unmatched}, 
and we leverage hard-attention to implement a dynamic state updating process to recognize $\D$.

\paragraph{Representation}
For an input $w_{1:n} \in \gamma \Sigma^* \omega$, the representation at position $i$ of layer $\ell$ has five parts $\bx_{i, \ell} = [\bt_i, o_i, p_i, m_{i, \ell}, e_{i, \ell}]$: 
(i) a \emph{bracket type embedding} $\bt_i \in \R^{\lceil \log k \rceil}$ that denotes which bracket type ($1 \cdots k$) the token is (or if the token is start/end token); 
(ii) a \emph{bracket openness} bit $o_i \in \{0, 1\}$, where $1$ denotes open brackets (or start token) and $0$ denotes close one (or end token);
(iii) a \emph{positional encoding} scalar $p_i = i/n$; 
(iv) a \emph{match} bit $m_{i, \ell} \in \{0, 1\}$, where $1$ denotes matched and $0$ unmatched; 
(v) an \emph{error} bit $e_{i, \ell} \in \{0, 1\}$, where $1$ denotes error and $0$ no error.
Token identity parts $\bt_i, o_i, p_i$ are maintained unchanged throughout layers. 
The \textit{match} and \textit{error} bits are initialized as $e_{i, 0} = m_{i, 0} = 0$.

The first $D$ layers have identical self-attention blocks and feed-forward networks, detailed below.

\paragraph{Attention}
Consider the $\ell$-th self-attention layer ($\ell \in [D]$), and denote $\bx_i = \bx_{i, \ell-1}$, $m_i = m_{i, \ell-1}$, $\ba_i = \ba_{i, \ell}$, $\by_i = \bx_{i, \ell}$ for short. We have 3 attention heads: (i) an identity head $\att^{\mathsf{id}}$, where each token only attends to itself with attention output $\ba^{\id}_i = \bx_i$; (ii) a left head $\att^{\mathsf{left}}$ with future positional masking; (iii) a right head $\att^{\mathsf{right}}$ with past positional masking. The query, key, and value vectors for $\att^{\mathsf{left}}$ are defined as
$Q\bx_i = 1 \in \R$, 
$K\bx_i = p_i - m_i \in \R$, 
$V\bx_i = \bx_i \in \R^{d_{\md}}$, so that
\begin{align*} 
\ba^{\mathsf{left}}_{i} = \bx_{j_1}, \quad j_1  =\arg \max_{j < i} (j/n - m_j)
\end{align*}
is the representation of the nearest unmatched token to $i$ on its left side. 
Similarly 
\begin{align*}
\ba^{\mathsf{right}}_{i} = \bx_{j_2}, \quad j_2 
 =\arg \max_{j > i} (1 - j/n - m_j) 
\end{align*}
is the representation of the nearest unmatched token to $i$ on its right side. The attention output for position $i$ is the concatenation of these three outputs: $\ba_{i} = [\ba_{i}^{\mathsf{id}}, \ba_{i}^{\mathsf{left}}, \ba_{i}^{\mathsf{right}}] = [\bx_{i}, \bx_{j_1}, \bx_{j_2}]$.

\begin{figure*}[t]
    \centering
    \includegraphics[width=\textwidth]{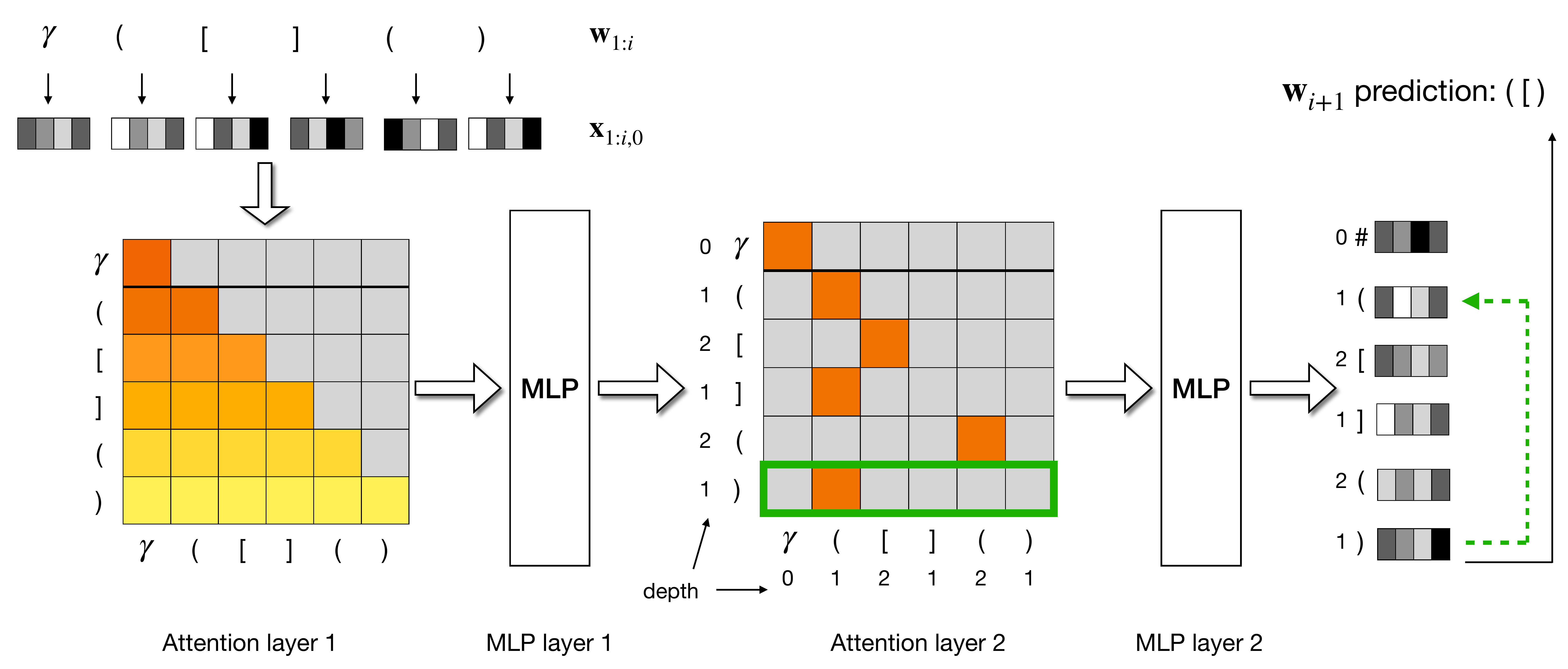}
    \caption{Our construction for Theorem~\ref{thm:lg2}. The first self-attention layer calculates token depths, while the second layer uses them so that each token attends to the closest unmatched open bracket ign the history, which is useful for next token prediction.}
    \label{fig:gen}
\end{figure*}

\paragraph{Feed-forward network (FFN)} 
Following the notation above, the feed-forward network $F: \ba_i \to \by_i$ 
serves to update each position's state using information from $\bx_{j_1}, \bx_{j_2}$. The high level logic (\fig{fig:rec}~(c)) is that, if $w_i$ is an open bracket, its potential matching half should be $w_j = w_{j_2}$ ($j_2 > i$), otherwise it should be $w_j = w_{j_1}$ ($j_1 < i$). If $w_i$ and $w_j$ are one open and one close, they either match (same type) or cause error (different types). If $w_i$ and $w_j$ are both open or both close, no state update is done for position $i$. 
Besides, token identity parts $\bt_i, o_i, p_i$ are copied from $\ba^{\mathsf{id}}_i$ to pass on.
The idea can be translated into a language of logical operations ($\wedge, \vee, \neg$) plus a $\textsc{same}(\bt, \bt')$ operation, which returns $1$ if vectors $\bt = \bt'$ and $0$ otherwise:
\begin{align*}
    \by_i &=  [\bt_i, o_i, p_i, m'_i, e'_i] \\
    m'_i &= m_i   \vee  (o_i \wedge \neg o_{j_2} \wedge s_2)  
                 \vee  (\neg o_i \wedge o_{j_1} \wedge s_1) \\
    e'_i &= e_i  \vee  (o_i \wedge \neg o_{j_2} \wedge \neg s_2)  
                \vee  (\neg o_i \wedge o_{j_1} \wedge \neg s_1)  \\
    s_1 &= \textsc{same}(\bt_i, \bt_{j_1}) \quad s_2 = \textsc{same}(\bt_i, \bt_{j_2})
\end{align*}
As we show in Appendix~\ref{sec:app-construction1}, a multi-layer perception with ReLU activations can simulate all operations $(\wedge, \vee, \neg, \textsc{same})$, thus the existence of our desired FFN.

\paragraph{Final layer} At the $(D+1)$-th layer, the self attention is designed as 
$Q\bx_i = 1 \in \R$, 
$K\bx_i = e_i + 1 - m_i \in \R$, 
$V\bx_i = (e_i, m_i) \in \R^2$. If all brackets are matched without error ($(e_i, m_i)=(0,1)$), all keys would be $0$, and the attention output of the last token $\ba_{n}$ would be $(0, 1)$. If any bracket finds error $(e_i=1)$ or is not matched $(m_i=0)$, the key would be at least $1$ and $\ba_{n}$ would not be $(0, 1)$. An FNN that emulates $(a, b) \mapsto \neg a \wedge b$ will deliver $y_{n}$ as the recognition answer.

\subsection{Two-layer Soft-Attention Network}
\label{subsec:2}

Our Theorem~\ref{thm:lg2} construction takes advantage of soft attention, residual connection, and layer normalization to calculate each token depth and translate it into a vector form at the first layer. Using the depth information, at the second layer each $w_i$ can attend to the stack-top open bracket at the position, in order to decide if open brackets or which type of close brackets can be generated as the next token (\fig{fig:gen}).

\paragraph{Representation}
The representation at position $i$, layer $\ell$ has four parts $\bx_{i, \ell} = [\bt_i, o_i, p_i, \bd_{i, \ell}]$, with bracket type embedding $\bt_i$, bracket openness bit $o_i$, position encoding $p_i$ already specified in Section~\ref{subsec:1}.
The last part $\bd_{i, \ell} \in \R^2$ is used to store depth information for position $i$, and initialized as $\bd_{i, 0} = (0, 0)$.

\paragraph{First Layer -- Depth Counting}
The first self-attention layer has two heads, where an $\att^{\id}$ head is still used to inherit $\bt_i, o_i, \bp_i$, and a future positional masking head\footnote{Here we assume $w_{i+1:n}$ is masked for position $i$, just for convenience of description.} $\att^{\mathsf{d}}$ aims to count depth with $Q\bx_i = K\bx_i = 1$ and $V\bx_i = 2 o_i - 1$, resulting in uniform attention scores and attention output $a^{d}_i = \sum_{j \le i} \frac{1}{i} \cdot (2 o_j -1) = d(w_{1:i})/i$.

However, our goal is to enable matching based on depth $d_i= d(w_{1:i})$, and the attention output $d_i/i$ isn't readily usable for such a purpose: the denominator $i$ is undesirable, and even a scalar $d_i$ cannot easily attend to the same value using dot-product attention. Thus in the first feed-forward network, we leverage residual connection and layer normalization to transform
\begin{align}
\label{eq:transform}
   d_i / i &\mapsto \bd_i = (\cos(\theta(d_i)), \sin(\theta(d_i)))
\end{align}
where $\theta(d) = \arctan\left(\frac{d}{D+2-d}\right)$ has an unique value for every $d \in \{0, \cdots, D+1\}$, so that
\begin{equation}
\begin{aligned}
   \bd_i \cdot \bd_j
   \begin{cases}
        = 1  & d_i = d_j \\
       < 1 - \frac{1}{10D^2} & d_i \neq d_j
   \end{cases}
\end{aligned}
\end{equation}
The representation by the end of first layer is $\bx_{i, 1} = [\bt_i, o_i, p_i, \bd_{i}]$. The full detail for the first FFN is in Appendix~\ref{subsec:app-firstlayerffn}.

\paragraph{Second layer -- Depth Matching} The second self-attention layer has a depth matching hard-attention head $\att^{\mathsf{match}}$, with query, key, value vectors as
$Q \bx_{i} = [20D^2 \cdot \bd_i, 1, 2] \in \R^4$,
$K \bx_{i} = [\bd_i, p_i, o_i] \in \R^4$,
$V \bx_{i} = \bx_{i}$, 
so that attention score 
\begin{align*}
    \langle Q\bx_i, K\bx_{j} &\rangle =  20D^2 \bd_i \cdot \bd_j + j/n + 2o_j \\
   & \begin{cases}
        = 20D^2 + 2 + j/n   & d_i = d_j, o_j = 1 \\
       \le 20D^2 + 1  & \text{otherwise}
   \end{cases}
\end{align*}
would achieve its maximum when $w_j$ ($ j \le i$) is the open bracket (or start token) closest to $w_i$ with $d_j = d_i$. The attention output is $\ba_i = [\ba^{\id}_i, \ba^{\mathsf{match}}_i] = [\bx_i, \bx_j]$ where $j = \max \{ j\le i | d_i=d_j \wedge o_j=1\}$. 

With such a $[\bx_i, \bx_j]$, the second-layer FFN can readily predict what $w_{i+1}$ could be. It could be any open bracket when $d_i < D$ (i.e.\,$\cos(\theta(d_i)) > \cos(\theta(D))$), and it could be a close bracket with type as $\bt_j$ (or end token if $w_j$ is start token). The detailed construction for such a FFN is in Appendix~\ref{subsec:app-secondlayerffn}.

\begin{figure*}[t]
    \centering
    \includegraphics[width=.325\textwidth]{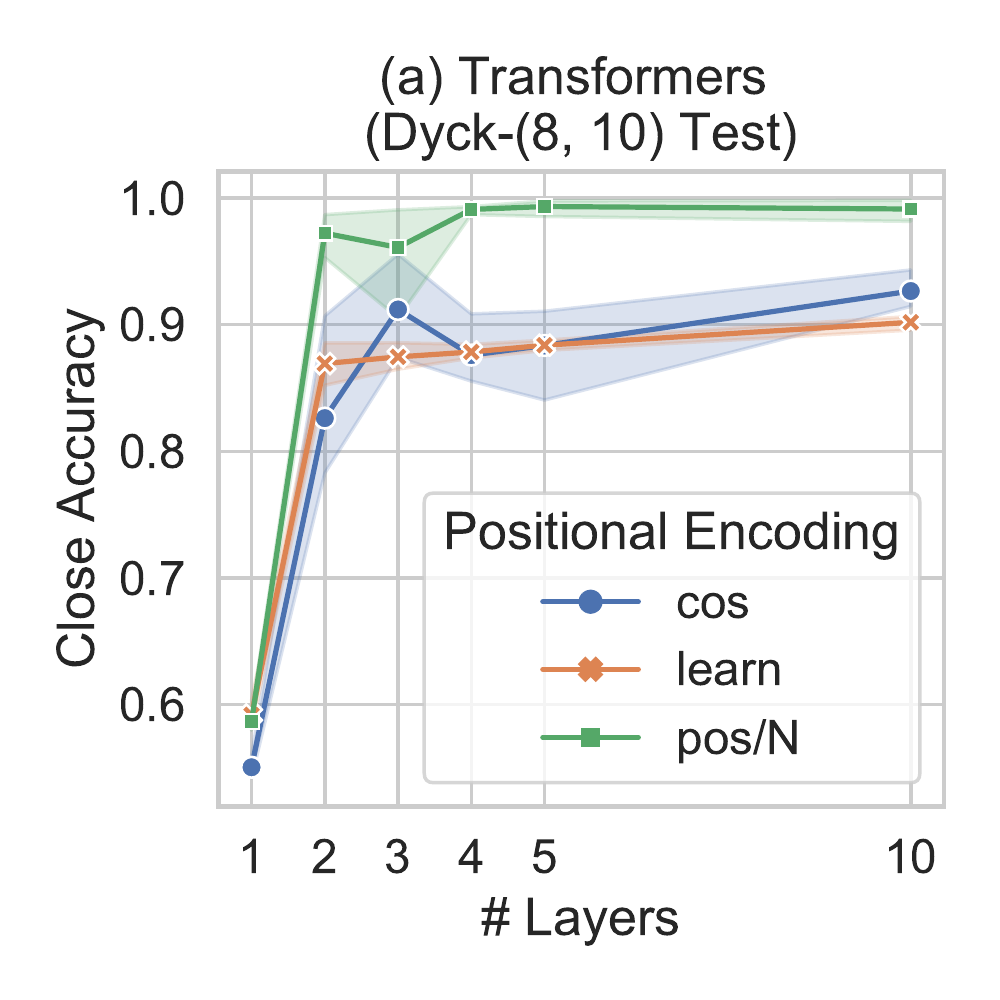}
    \includegraphics[width=.325\textwidth]{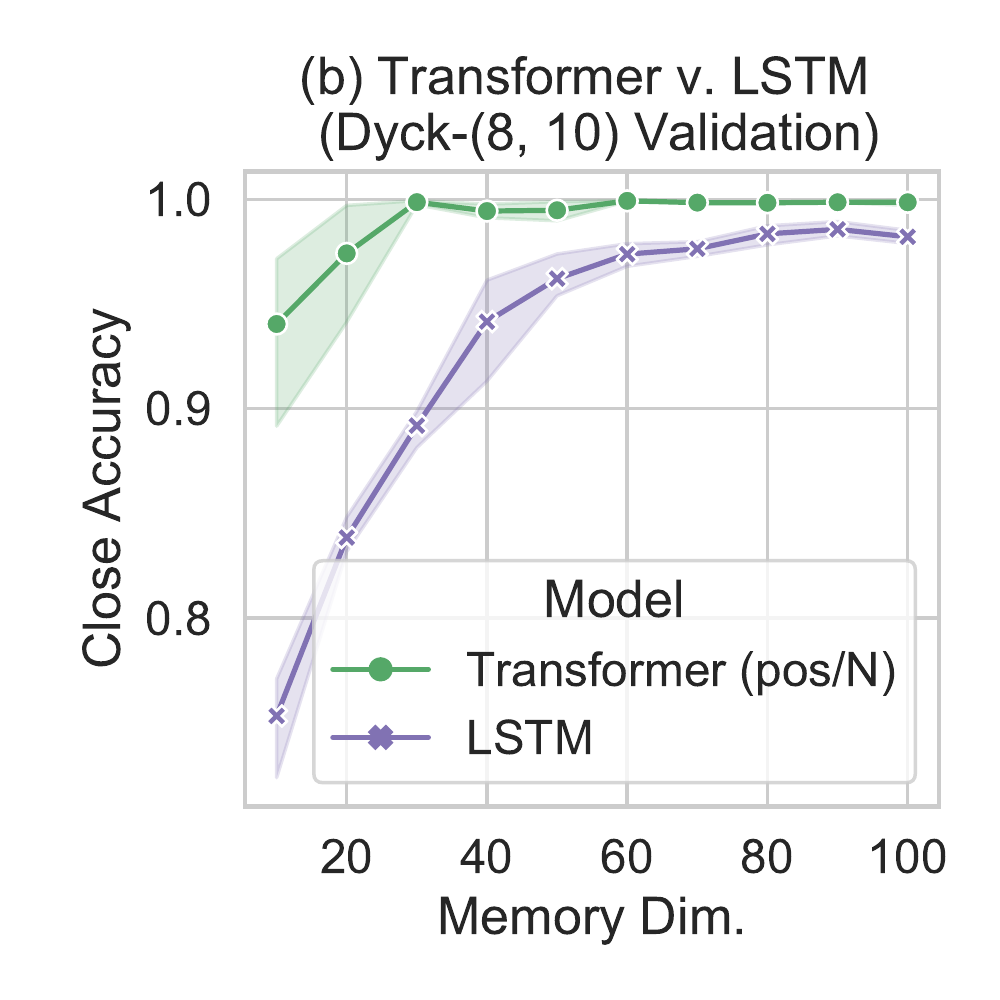}
    \includegraphics[width=.325\textwidth]{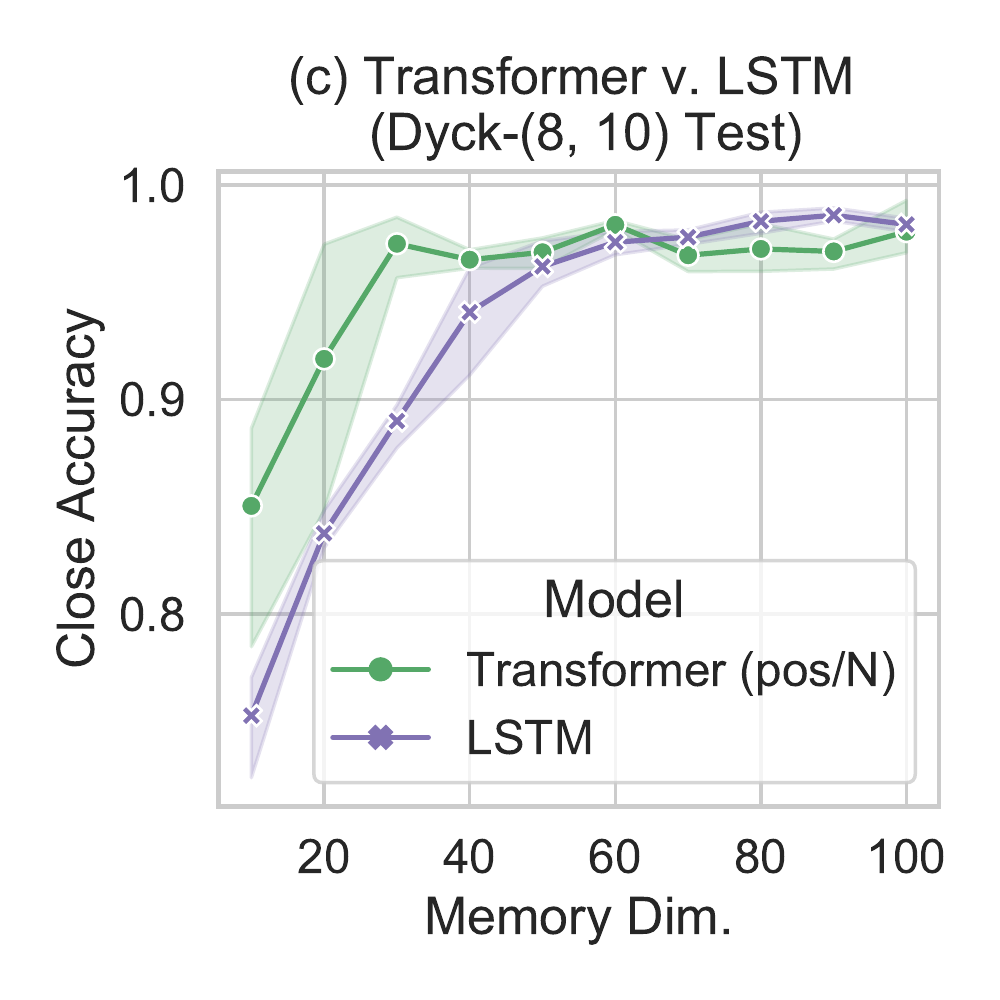}
    \caption{Results on $\Dyck_{8, 10}$ validation set (same input lengths as training) and test set (longer inputs). \textbf{(a)} compares Transformers of different layers ($L\in\{1,2,3,4,5,10\}$) and with different positional encodings (\textbf{\textsc{cos}}, \textbf{\textsc{learn}},\textbf{\textsc{pos/N}}) on the test set. \textbf{(b)} and \textbf{(c)} compare a 2-layer Transformer (\textbf{\textsc{pos/N}}) with a 1-layer LSTM over varying memory sizes on the validation and test sets respectively.
    }
    \label{fig:exp}
\end{figure*}

\paragraph{On $\Dyckk$ Generation} In fact, this theoretical construction can also generate $\Dyckk$, as intuitively the $O(\log n)$ precision assumption allows counting depth up to $O(n)$. But it involves extra conditions like feeding $n$ into network input, and may not be effectively learned in practice. Please refer to details in Appendix~\ref{subsec:app-dyckk}.

\paragraph{Connection to Empirical Findings} Our theoretical construction explains the observation in \citet{ebrahimi2020can}: the second layer of a two-layer Transformer trained on $\Dyck_k$ often produces virtually hard attention, where tokens attend to the stack-top open bracket (or start token). It also explains why such a pattern is found less systematically as input depth increases, as \eqref{eq:transform} is hard to learn and generalize to unbounded depth in practice.

\section{Experiments}
\label{sec:exp}

Our constructions show the \emph{existence} of self-attention networks that are capable of recognizing and generating $\Dyck_{k, D}$. Now we bridge theoretical insights into experiments, and study whether such networks can be \emph{learned} from finite samples and \emph{generalize} to longer input. The answer is affirmative when the right positional encodings and memory size are chosen according to our theory. 

We first present results on $\Dyck_{8, 10}$ (Section~\ref{subsec:results1}) as an example $\D$ language to investigate the effect of different positional encoding schemes, number of layers, and hidden size on the Transformer performance, and to compare with the LSTM performance. We then extend the Transformer vs.\,LSTM comparison on more $\D$ languages ($k\in\{2, 8, 32, 128\}$, $D\in \{3, 5, 10, 15\}$) in Section~\ref{subsec:more_results}. Finally, we apply the novel scalar positional encoding to natural language modeling with some preliminary findings (Section~\ref{subsec:wikitext}).

\subsection{Evaluation on $\Dyck_{8, 10}$}
\label{subsec:results1}

\paragraph{Setup}  For $\Dyck_{8, 10}$, we generate training and validation sets with input length $n \le 700$, and test set with length $700 < n \le 1400$. We train randomly initialized Transformers using the Huggingface library~\cite{wolf2019huggingface}, with one future positional masking head, $L \in \{1, 2, 3, 4, 5, 10\}$ layers, and a default memory size $d_{\md} = 30$. We search for learning rates in $\{0.01, 0.001\}$, run each model with 3 trials, and report the average accuracy of generating \emph{close} brackets, the major challenge of $\Dyck_{k, D}$. More setup details are in Appendix~\ref{subsec:app-setup}.

\paragraph{Positional Encodings} We compare 3 types of positional encodings: (i) Fourier features~(\textbf{\textsc{cos}}); (ii) learnable features~(\textbf{\textsc{learn}}); (iii) a scalar $i / 6000$ for position $i$~(\textbf{\textsc{pos/N}}). 
Note that (i, ii) are original proposals in \citet{vaswani2017attention}, where positional encoding vectors are \emph{added} to the token embeddings, while our proposal (iii) encodes the position as a \emph{fixed} scalar \emph{separated} from  token embeddings.

On the validation set of $\Dyck_{8, 10}$ (see Appendix~\ref{subsec:app-results}), all three models achieve near-perfect accuracy with $L \ge 2$ layers.
On the test set (\fig{fig:exp}(a)) however, only \textbf{\textsc{pos/N}} maintains near-perfect accuracy, even with $L=10$ layers. Meanwhile, \textbf{\textsc{learn}} and \textbf{\textsc{cos}} fail to generalize, because encodings for position $700 < i \le 1400$ are not learned (for \textbf{\textsc{learn}}) or experienced (for \textbf{\textsc{cos}}) during training. 
The result validates our theoretical construction, and points to the need for \emph{separate} and \emph{systemic} positional encodings for processing long and order-sensitive sequences like $\D$.

\paragraph{Memory Size and Comparison with LSTM} 
We compare a two-layer Transformer (\textbf{\textsc{pos/N}}) with a one-layer LSTM\footnote{LSTMs only need one layer to process $\D$~\cite{hewitt2020rnns}, while Transformers at least need two in our constructions. We also experimented with two-layer LSTMs but did not find improved performance.}~\cite{hochreiter1997long} using varying per-layer memory sizes $d_{\md} \in \{10, 20, \cdots, 100\}$. 
As \fig{fig:exp}~(b) shows, the Transformer consistently outperforms the LSTM on the validation set. On the test set (\fig{fig:exp}~(c)), the Transformer and the LSTM first achieve a $>90\%$ accuracy using $d_{\md}=20$ and $40$ respectively, and an accuracy of $>95\%$ with $d_{\md}=30$ and $50$, respectively. These findings agree with our theoretical characterization that self-attention networks have a memory advantage over recurrent ones. 

\begin{figure*}[!tbp]
  \centering
  \begin{minipage}[b]{0.66\textwidth}
    \includegraphics[width=.49 \textwidth]{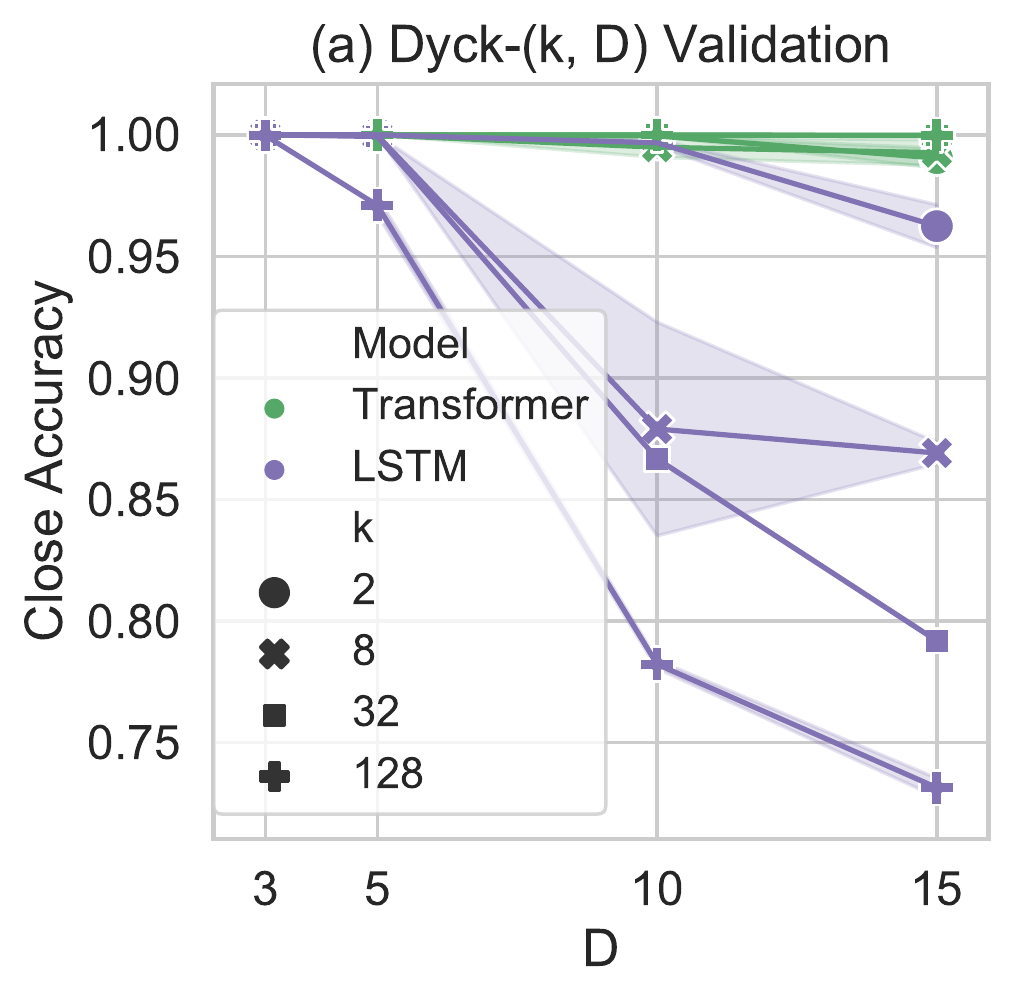}
    \includegraphics[width=.49 \textwidth]{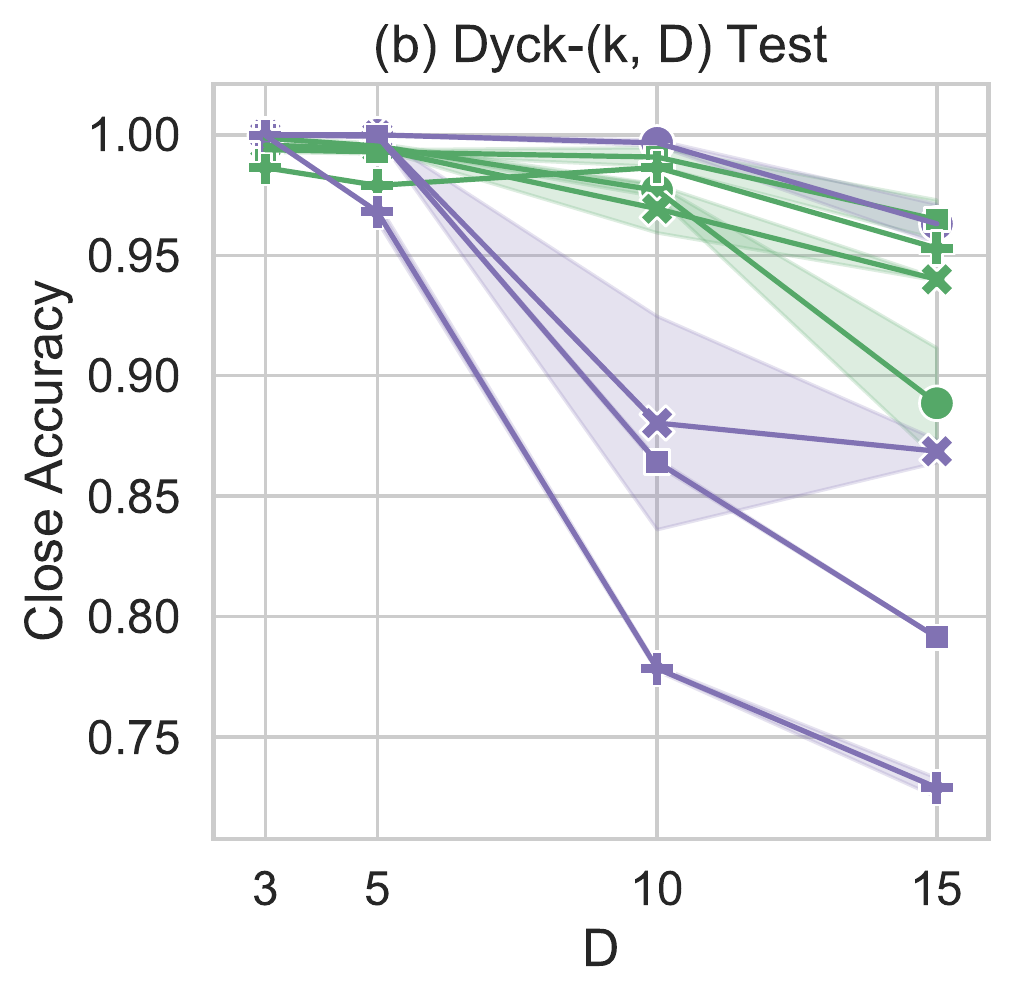}
    \caption{Results on more $\Dyck_{k, D}$ languages. 
    }
    \label{fig:exp_language}
  \end{minipage}
  \hfill
  \begin{minipage}[b]{0.33\textwidth}
    \includegraphics[width=\textwidth]{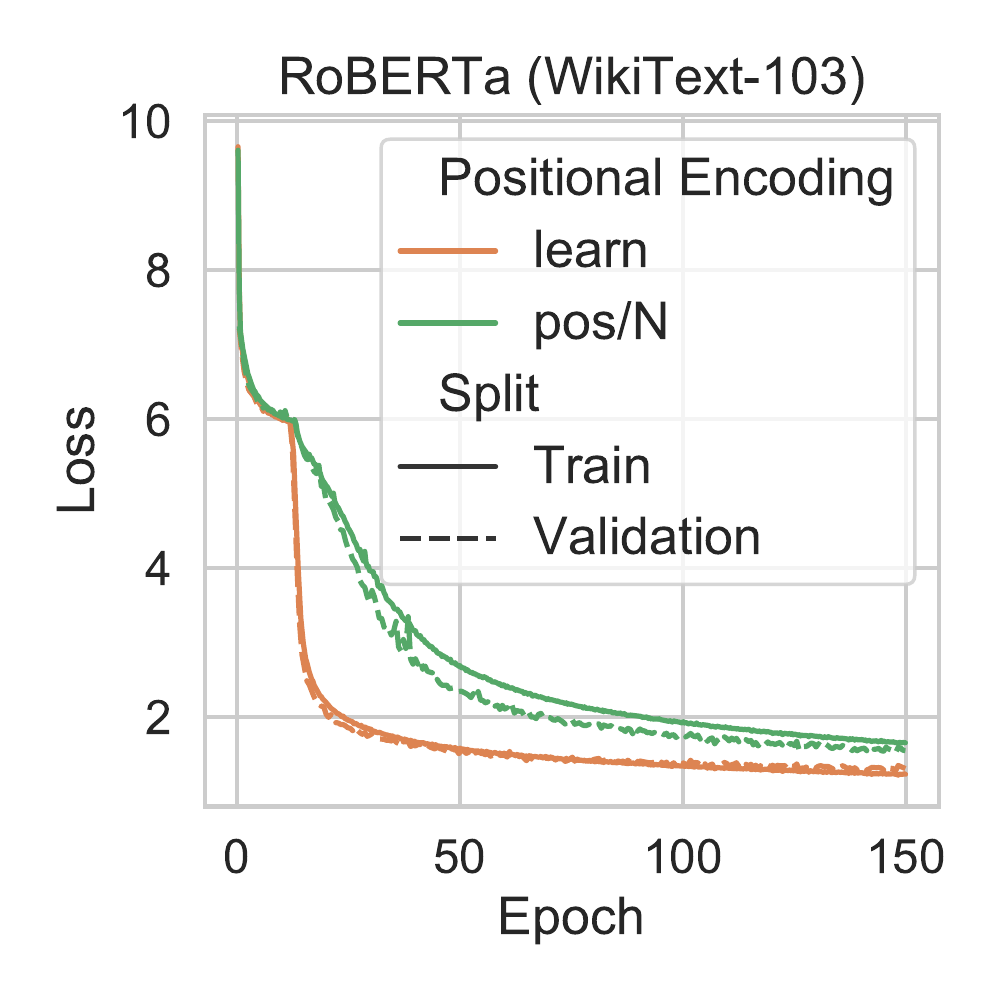}
    \caption{Results on WikiText-103.}
    \label{fig:wikitext}
  \end{minipage}
\end{figure*}

\subsection{Evaluation on More $\D$ Languages}
\label{subsec:more_results}
\paragraph{Setup} In order to generalize some of the above results, we generate a wide range of $\D$ languages with different vocabulary sizes ($k\in\{2, 8, 32, 128\}$) and recursion bounds ($D\in \{3, 5, 10, 15\}$). We continue to compare the one-layer LSTM versus the two-layer Transformer (\textbf{\textsc{pos/N}}). For each model on each language, we perform a hyperparameter search for learning rate in \{0.01, 0.001\} and memory size $d_{\md} \in \{10, 30, 50\}$, and report results from the best setting based on two trials for each  setting. 

\paragraph{Results} The validation and test accuracy of the models are reported in \fig{fig:exp_language}, and more fine-grained results for each $d_{\md} \in \{10, 30, 50\}$ are in Appendix~\ref{subsec:app-results}.  The Transformer attains a $>99.9\%$ validation accuracy and a $> 94\%$ test accuracy across all languages, strengthening the main claim that self-attention networks can learn $\D$ languages and generalize to longer input.
On the other hand, the validation and test accuracy of the LSTM model are less than $80\%$ when the vocabulary size and recursion depth are large, i.e.\,$(k, D) \in \{ (32, 15), (128, 10), (128, 15)\}$\footnote{Note that \citet{hewitt2020rnns} only reports $D \in \{3, 5\}$.}, which reconfirms Transformers' memory advantage under limited memory ($d_{\md} \le 50$).

\subsection{Evaluation on WikiText-103}
\label{subsec:wikitext}

In Section~\ref{subsec:results1}, we show a Transformer with the scalar positional encoding scheme (\textbf{\textsc{pos/N}}) can learn $\D$ and generalize to longer input, while traditional positional encoding schemes ((\textbf{\textsc{cos}}), (\textbf{\textsc{learn}})) lead to degraded test performance. To investigate whether such a novel scheme is also useful in NLP tasks, we train two RoBERTa\footnote{We also tried language modeling with GPT-2 models, and \textbf{\textsc{pos/N}} has slightly larger train/validation losses than \textbf{\textsc{learn}} throughout the training. Interestingly, using no positional encoding leads to the same loss curves as \textbf{\textsc{learn}}, as positional masking leaks positional information.} models (\textbf{\textsc{pos/N}}, \textbf{\textsc{learn}}) from scratch on the WikiText-103 dataset~\cite{merity2016pointer} for 150 epochs. 

Figure~\ref{fig:wikitext} shows the masked language modeling loss on both training and validation sets. By the end of the training, \textbf{\textsc{pos/N}} has a slightly larger validation loss (1.55) than \textbf{\textsc{learn}} (1.31). But throughout the optimization, \textbf{\textsc{pos/N}} shows a gradual decrease of loss while \textbf{\textsc{learn}} has a sudden drop of loss around 20-30 epochs. We believe it will be interesting for future work to explore how \textbf{\textsc{pos/N}} performs on different downstream tasks, and why \textbf{\textsc{pos/N}} seems slightly worse than \textbf{\textsc{learn}} (at least on this MLM task), though theoretically it provides the complete positional information for Transformers. These topics will contribute to a deeper understanding of positional encodings and how Transformers leverage positional information to succeed on different tasks.
\section{Discussion}
\label{sec:discussion}

In this paper, we theoretically and experimentally demonstrate that self-attention networks can process bounded hierarchical languages $\D$, even with a memory advantage over recurrent networks, despite performing distributed processing of sequences without explicit recursive elements. 
Our results may explain their widespread success at modeling long pieces of text with hierarchical structures and long-range, nested dependencies, including coreference, discourse and narratives. We hope these insights can enhance knowledge about the nature of recurrence and parallelism in sequence processing, and lead to better NLP models.

\section*{Acknowledgement}
We thank Xi Chen, members of the Princeton NLP Group, and anonymous reviewers for suggestions and comments.

\section*{Ethical Consideration}
Our work is mainly theoretical with no foreseeable ethical issues.

\bibliographystyle{acl_natbib}
\bibliography{acl2021}

\clearpage
\newpage
\appendix

\section{Construction Details of Section~\ref{subsec:1}}
\label{sec:app-construction1}

We provide missing details on the construction of $(D+1)$-layer Transformer with hard attention. 
In particular, we prove that neural networks are capable of simulating logic gates: \textsc{and}, \textsc{or}, \textsc{not}, \textsc{same} and arithmic gates: \textsc{greaterthan} and \textsc{equal} gate. For input $x, y \in \R$, the \textsc{greaterthan} satisfies that  $\textsc{greaterthan}(x, y) = 1$ if $x \geq y + c$ and $\textsc{greaterthan}(x, y) = 0$ when $x < y$;  the \textsc{equal} gate satisfies $\textsc{equal}(x, y) = 1$ if $x = y$ and $\textsc{equal}(x, y) = 0$ when $x < y - c$ or $x > y + c$. Here $c$ is a constant independent of $x, y$.
\begin{lemma}
\label{lem:gate}
A constant layer neural network can simulate logic gates: \textsc{and}, \textsc{or}, \textsc{not}, \textsc{same} and arithmic gates: \textsc{greaterthan}, \textsc{equal}.
\end{lemma}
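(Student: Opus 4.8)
The plan is to exhibit small ReLU MLPs for each gate and then note that gates compose (the output of one gate can be fed into another, since each gate outputs a value in $\{0,1\}$ or in a known bounded set). I will build the arithmetic gates first, because the logic gates are essentially special cases, and then handle $\textsc{same}$ as a reduction to $\textsc{equal}$ applied coordinatewise.

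First I would construct $\textsc{greaterthan}$. The key observation is that for the promised inputs (either $x \ge y+c$ or $x < y$, with $c$ a fixed constant) the quantity $x-y$ is either $\ge c$ or $< 0$, so there is a ``gap'' of width $c$ to exploit. A single hidden layer suffices: set $h = \relu\!\big(\tfrac{1}{c}(x-y)\big) - \relu\!\big(\tfrac{1}{c}(x-y) - 1\big)$, which is the clipped-linear (``hard sigmoid'') function of $(x-y)/c$; it equals $0$ when $x-y \le 0$ and equals $1$ when $x-y \ge c$, exactly as required. For $\textsc{equal}$, the promise is that $|x-y|$ is either $0$ or $> c$; I would write $\textsc{equal}(x,y) = 1 - \textsc{greaterthan}(|x-y|, 0)$ after first computing $|x-y| = \relu(x-y) + \relu(y-x)$ in one layer, so $\textsc{equal}$ is a constant-depth ReLU network as well. (One must be mildly careful that the constant $c$ in the composed network may shrink by a fixed factor, but it stays a fixed constant independent of the inputs, which is all the statement asks.)

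Next the Boolean gates. With inputs in $\{0,1\}$: $\textsc{not}(a) = \relu(1-a)$; $\textsc{or}(a,b) = \relu(a+b) - \relu(a+b-1)$ (clip the sum at $1$), i.e.\ again the hard-sigmoid trick; $\textsc{and}(a,b) = \relu(a+b-1)$. These are all one hidden layer. Multi-argument conjunctions/disjunctions as used in the FFN construction (Section~\ref{subsec:1}) then follow by composing a constant number of these, or directly: $\textsc{and}(a_1,\dots,a_m) = \relu\big(\sum_i a_i - (m-1)\big)$. Finally $\textsc{same}(\bt,\bt')$ for $\bt,\bt' \in \{0,1\}^{\lceil\log k\rceil}$: compute $e_j = \textsc{equal}(t_j, t_j')$ for each coordinate $j$ in parallel (one layer, since the coordinates don't interact), then output $\textsc{and}(e_1,\dots,e_{\lceil\log k\rceil})$; this is constant depth because ``constant'' here is allowed to mean a fixed number of layers independent of $k$ (the width grows with $\log k$, but that is accounted for separately by the $O(\log k)$ memory-size bound).

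The main obstacle — really the only subtlety — is bookkeeping the gap constants and input ranges through compositions: each gate should be stated with an explicit promise on its inputs (values in $\{0,1\}$, or separated by a known margin) and an explicit guarantee on its output, so that when gates are chained (e.g.\ $\textsc{equal}$ feeding $\textsc{not}$ feeding $\textsc{and}$ inside $\textsc{same}$) the output of each stage still satisfies the next stage's promise. I would state a single invariant — ``all intermediate scalars lie in $\{0,1\}$ up to the margins tracked'' — and check it once; after that every individual gate is a two-line ReLU formula and the composition is immediate, giving a constant-layer network overall.
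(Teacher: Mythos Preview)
Your proposal is correct and takes essentially the same approach as the paper: exhibit an explicit small ReLU formula for each gate and compose. The specific formulas differ only cosmetically---the paper uses nested $\max$'s for \textsc{greaterthan} (e.g., $z_1=\tfrac{1}{c}\max\{c-\max\{x-y,0\},0\}$ then $z=\max\{1-z_1,0\}$) rather than your hard-sigmoid, builds \textsc{equal} as $\neg\textsc{greaterthan}(x,y)\wedge\neg\textsc{greaterthan}(y,x)$ rather than via $|x-y|$, and writes \textsc{same} directly as a Boolean XNOR formula rather than reducing to coordinatewise \textsc{equal}---but these are interchangeable implementation choices.
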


\begin{proof}
Our construction is as follows.

    (1) \textsc{and} gate. Given input $x_1, \ldots, x_m \in \{0, 1\}$, we compute 
    $
    z = \max\{x_1 + \cdots+ x_m - m + 1, 0\}.
    $ 
    We conclude that $z = 1$ iff $x_1 = \cdots = x_m = 1$ and $z = 0$ otherwise.
    
    (2) \textsc{not} gate. Given input $x \in \{0, 1\}$, it suffices to compute $z = \max\{1 - x, 0\}$.
    
    (3) \textsc{or} gate. Given input $x_1, \ldots, x_m \in \{0, 1\}$, we compute 
    $
    z = \max \{ 1 - \max\{1 - x_1 - \cdots - x_m, 0\}, 0\}.
    $
    It is easy to see that $z = 1$ iff one of $x_i = 1$ ($i \in [m]$) and $z = 0$ otherwise.
    
    (3) \textsc{same} gate. Given input $x_1, \ldots, x_m \in \{0,1\}$ and $y_1, \ldots, y_m \in \{0, 1\}$. The \textsc{same} gate is equivalent to $z = \left((x_1 \vee \bar{y}_1) \wedge(\bar{x}_1 \vee y_1)\right) \vee \cdots \vee \left((x_m \vee \bar{y}_m) \wedge(\bar{x}_m \vee y_m)\right)$. We can construct it using logic gates: \textsc{and}, \textsc{or}, \textsc{not} .
    
    (4) \textsc{greaterthan} gate. Given $x, y \in \R$, compute $z_1 = \frac{1}{c}\max\{c - \max\{x - y, 0\}, 0\}$, we have that $z_1 = 0$ when $x > y+c$ and $z = 1$ when $x \leq y$. Taking $z = \max\{1 - z_1, 0\}$ completes the construction.
    
    (5) \textsc{equal} gate. Given $x, y \in \R$. Let $z_1 = \textsc{GreaterEqual}(x, y)$ and $z_2 = \textsc{GreaterEqual}(y, x)$. It suffices to take $z = \neg z_1 \wedge \neg z_2$.\qedhere
\end{proof}

We have proved a $(D+1)$-layer Transformer with hard attention can recognize $\D$. We next extend the construction for recognition task to generation task and prove that a $D$ layer Transformer is capable of generating $\D$.
\begin{corollary}
\label{cor:lg1-app}
$\forall k, D \in \mathbb{N}^{+}$, there exists a $D$-layer hard-attention network that can generate $\D$. It uses both a future-position masking head and a past-position masking head, a $O(\log k)$ memory size, and $O(\log n)$ precision for processing input length up to $n$. 
\end{corollary}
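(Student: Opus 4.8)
The plan is to recycle the parenthesis-matching machinery of Theorem~\ref{thm:lg1} and attach a next-token predictor to its last layer. For generation in the sense of Definition~\ref{def:generation} it is enough to assign, to every prefix $w_{1:i}$, probability at least $\eps$ to exactly those tokens $w_{i+1}$ that keep the string inside $\D$; concretely I would set $f_\theta(w_{i+1}\mid w_{1:i})$ by running the network on the length-$i$ input $w_{1:i}$ and decoding the output at position $i$. Because the fed input is itself the prefix, a future-masking (``left'') head and a past-masking (``right'') head both range only over $\{1,\dots,i\}$, so the network is a genuine autoregressive model even though it masks in both directions --- the right head is still required so that an open bracket can reach its candidate partner to its right, and the match/error bits this induces then flow leftward exactly as in Theorem~\ref{thm:lg1}. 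I would take the $D$ layers to be (essentially) the identical matching layers of Section~\ref{subsec:1}: the identity, left and right hard-attention heads together with the FFN that updates the $(m,e)$ state through Lemma~\ref{lem:gate}. By the analysis of Theorem~\ref{thm:lg1}, after these layers every bracket of $w_{1:i}$ matched inside $w_{1:i}$ carries $m=1$, every type clash carries $e=1$, and the positions still carrying $m=0$ are the current stack together with the start symbol $\gamma$.

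The new ingredient is the read-out FFN at the last layer. Given the final token $w_i$: if $w_i$ is an open bracket it is itself the top of the stack, so the legal continuations are the matching close bracket and every open bracket, the latter only when the current depth is below $D$; if $w_i$ is a close bracket then it has just popped a complete sub-word, so the current depth is automatically at most $D-1$, the top of the stack is the nearest token still carrying $m=0$ to its left (returned by the future-masking left head), and the legal continuations are every open bracket, the close bracket matching that stack-top's type, and --- if the stack-top is $\gamma$ --- the end token $\omega$; the case $w_i=\gamma$ is the empty stack. Each of these sets is a Boolean function of $\bt_i$, $o_i$, the type/openness read off the stack-top, the $(m,e)$ bits, and one ``depth $<D$'' flag, hence realizable by a ReLU MLP via Lemma~\ref{lem:gate}; I would put mass $\eps'>\eps$ on each legal token and $0$ elsewhere, overriding to mass below $\eps$ whenever $e=1$ somewhere or a close bracket has no partner on its left, so that no word outside $\D$ is generated. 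This fits in $D$ layers rather than the $D+1$ of Theorem~\ref{thm:lg1} because generation inspects a single position instead of a global conjunction over all positions, so the recognizer's final aggregation layer is unnecessary and its work merges into the last matching layer.

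The step I expect to be the real obstacle is computing the ``depth $<D$'' flag in the hard-attention model: hard attention cannot average, so there is no direct depth counter analogous to the first layer of the soft-attention construction of Theorem~\ref{thm:lg2}, and the depth has to be reconstructed from the matching structure. Two facts make this feasible in $D$ layers. First, the flag is only needed when $w_i$ is an open bracket --- a close bracket has just popped a sub-word, so it leaves depth at most $D-1$ for free. Second, if an open bracket $w_i$ sits at depth $D$, with stack $s_1<\dots<s_D=i$, then for every $j\ge 1$ the complete sub-word strictly between $s_j$ and $s_{j+1}$ has internal depth at most $D-j$, hence collapses within the matching layers, so the stack levels beneath $w_i$ are exposed as a chain of unmatched open brackets linked by the left head; giving each matching layer's FFN a small counter (values in $\{0,\dots,D\}$, i.e.\,$O(1)$ bits) that is carried along this chain then lets the final layer tell whether the chain already contains $D$ entries and refuse to open a $(D+1)$-st bracket, which is exactly the mechanism keeping the generated language from exceeding depth $D$ (and is the generation analogue of how Theorem~\ref{thm:lg1} rejects over-deep strings by a leftover uncollapsed sub-word). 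The delicate point is to check that this counter is correctly updated by one ReLU FFN per layer and stabilizes within the $D$ available layers, so the layer budget is not inflated by the cost of walking the stack; granting this, the $O(\log k)$ memory and $O(\log n)$ precision bounds are inherited from Theorem~\ref{thm:lg1}, since each token carries only $O(\log k)$ extra bits (a bracket type plus a constant number of flags and bounded counters) while positions still need $\log n$ bits.
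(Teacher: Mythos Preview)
Your plan to reuse the matching layers and read the stack top via the left head is sound, and you correctly identify that the depth flag is only needed when $w_i$ is an open bracket. But the ``delicate point'' you flag is in fact a genuine gap: the chain-propagation counter does \emph{not} stabilize in $D$ layers. Take $D=3$ and the prefix $\gamma\,\ob_1\ob_2\ob_3\cb_3\cb_2\,\ob_2\ob_3\cb_3\,\ob_3$, with stack $s_1=1,s_2=6,s_3=9$. The sub-word between $s_1$ and $s_2$ has relative depth $2$, so $s_2$ only sees $s_1$ at the input of layer~$3$; hence $s_2$'s counter becomes $2$ only at the \emph{output} of layer~$3$, and $s_3$ --- reading $s_2$'s pre-update value --- ends layer~$3$ with counter $2$, not $3$. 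In general the sub-word between $s_1$ and $s_2$ may have depth $D-1$, so $s_2$ first receives the correct value at layer $D$, and propagating that up the chain to $s_D$ costs another $D-2$ layers. The bottom-up walk and the inside-out collapse of sub-words fight each other: the link $s_2\to s_1$ that the counter needs first is the one that opens last.

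The paper avoids this by a single extra token rather than a chained counter: it appends a generation token $\nu$ after $w_T$ and treats it as a close bracket that any open bracket may match. Now the stack is consumed \emph{top-down}: after the depth-$(D-d)$ suffix collapses, $s_d$ matches $\nu$; after the depth-$(D-d+1)$ sub-word between $s_{d-1}$ and $s_d$ collapses, $s_{d-1}$ matches $\nu$; and so on, so that $\nu$ absorbs $s_{d-j+1}$ exactly at layer $D-d+j$ and finishes with counter $d$ at layer $D$. The point is that this consumption order is \emph{aligned} with the collapse order (shallow sub-words near the top collapse first), whereas your bottom-up chain needs the deepest sub-word to collapse before anything can start. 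Along the way $\nu$ also records the type of the first open bracket it meets, which is the stack top --- so the same mechanism gives both the depth and the matching type, without the case split on whether $w_i$ is open or close.
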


\ifdefined\isacl

\else
\begin{proof}
In the generation task, we are given an input segment $\mathbf{w} = \gamma w_{1}w_{2}\ldots w_T$ ($\gamma$ is the start token) and we want to generate the next symbol $w_{T+1}$. 
We augment the input with a generation token $\nu$ at the end and feed input $\gamma w_{1}w_2\ldots w_T\nu$ to the Transformer, %
The output probability distribution is contained in $\by_{T+2,\ell}$, i.e., the internal representation of the generation token $\nu$ at the last layer.

In the generation task, we add two extra parts for the representation at position $i$ of layer $\ell$ and augment the position encoding: $\bx_{i, \ell} = [\bt_i, o_i, \mathbf{p}_{i}, m_{i, \ell}, e_{i, \ell}, \tilde{\mathbf{m}}_{i, \ell}, c_{i,\ell}]$.
In particular, we add $\tilde{\mathbf{m}}_{i, \ell}$ and $c_{i, \ell}$ to record the matched open bracket and the depth of generation token separately.
The position encoding consists of three parts: $p_{i_1} = \frac{i}{n}$, $p_{i_2} = i$ and $p_{i_3} = n$, where $n$ denotes an upper bound on the length of sequence we aim to generate.

The first $D$ layers are the same as the recognition task, except that we also record the matched open bracket of the generation token and count its depth. For ease of presentation, we assume there is a special bit $g_{i}$ contained in $\bt_i$ that indicates the generation token. We add the following operations to each FNN layer,
\begin{align*}
    c_{i,\ell} = &~ c_{i,\ell-1} + m'_i\\
    \tilde{\mathbf{m}}_{i, \ell} = &~\tilde{\mathbf{m}}_{i, \ell-1} \wedge \tilde{\mathbf{m}}'_{i}
\end{align*}
where $m'_i \in \{0,1\}$, $\tilde{\mathbf{m}}'_{i} \in \{0,1\}^{\lceil \log k\rceil}$ obeys
\begin{align*}
    m'_i &=  g_i \wedge ((o_i \wedge \neg o_{j_2} \wedge s_2)  
                 \vee  (\neg o_i \wedge o_{j_1} \wedge s_1))\\ %
    \tilde{\mathbf{m}}'_{i} &=  g_i \wedge \neg m_i \wedge  o_{j_1} \wedge \mathbf{t}_{j_1}\\ %
    s_1 &= \textsc{same}(\bt_i, \bt_{j_1}) \quad s_2 = \textsc{same}(\bt_i, \bt_{j_2}).
\end{align*}
Here $m'_i$ indicates whether there is a matched bracket in the $\ell$-th layer, $\tilde{\mathbf{m}}'_{i}$ equals the first matched open bracket for generation token, and it is $\mathbf{0}$ otherwise. 
We also slightly abuse of notation in the above equations and when we write $c = a \wedge \mathbf{b} \in \{0,1\}^{q}$ for a boolean value $a\in \{0,1\}$ and a boolean-valued vector $b\in\{0,1\}^{q}$, we means $c_{i} = a\wedge b_i$ ($i \in [q]$), i.e. we perform coordinate-wise logic operations.

We also need to make some modifications to the last FNN-layer. 
Ideally, we can choose between $k$ open brackets and the matched close bracket, but we also need to consider some boundary case, including
(1) the depth of the generation token reaches the maximum, i.e. $c_{i, L} = D$,
(2) the length of the sequence is about to reach the maximum, i.e. $i + c_{i, L} = n$.
We implement the last layer as follow.
\begin{align*}
    \by_i =&~ [\tilde{o}_i, \mathbf{z}_i, \bar{\mathbf{z}}_i]\\
    \tilde{o}_i = &~ \neg (c_{i, L} = D) \vee \neg (p_{i_2} = c_{i, L} + p_{i_3} ) \\
    \mathbf{z}_i =&~ \neg (c_{i, L} = 0) \wedge \tilde{\mathbf{m}}_{i, D+1}\\
    \bar{\mathbf{z}}_i = &~ = \mathbf{1} -  \mathbf{z}_i.
\end{align*}

The final output is determined by on $V\by_{T+2}$, where $V \in \R^{2k \times 2\lceil \log k\rceil + 1}$ satisfies  $V_{i, 1} = 0$ and $V_{i, 1:}$ is the binary encoding of the $i$-th close bracket and its complement when $i \in \{1, \cdots, k\}$; $V_{i, 1} = \lceil \log k \rceil$ and $V_{i, j} = 0$ when $i \in \{k+1, \cdots, 2k\}$ and $j > 1$. Let $S \subseteq [2k]$ denote the index of valid output, we conclude that $(V\by_{T+2})_{i} = \lceil \log k\rceil$ for $i \in S$ and $(V\by_{T+2})_{i} \leq  \lceil \log k\rceil - 1$ for $i \notin S$.
\end{proof}

\fi

\paragraph{Soft attention}
Both Theorem~\ref{thm:lg1} and Corollary~\ref{cor:lg1-app} can be adapted to soft attention, by setting the temperature parameter $\eta$ in softmax operator to be sufficient large, say $\eta = \Omega(n\log nD)$. Then one can use soft attention to simulate hard attention. In order to fit the precision, for the soft attention distribution $\mathbf{p} = [p_1, \cdots, p_m]$, we round $p_i$ to the closest multiple of $\frac{1}{Cn}$, where $C$ is a large constant.

\section{Construction details of Section~\ref{subsec:2}}
\label{sec:app-construction2}

We provide missing details of the construction in Section~\ref{subsec:2}.

\subsection{First Layer FFN}
\label{subsec:app-firstlayerffn}
Recall the output of the first attention layer is $\ba_{i, 1} = [\bt_i, o_i, p_i, \mathbf{d}_{i, 1}]$, where $\bt_i$, $o_i$, $p_i$ are the bracket type embedding, the bracket openness bit and the position encoding. 
$\mathbf{d}_{i, 1} \in \R^{2}$ contains the information $d_i/i$, where $d_{i} = d(w_{1:i})$ equals the depth at position $i$.
For ease of presentation, we assume it also contains an entry with $1/i$, this can be derived with an extra attention head in the first layer or be inherited from an extra position encoding.
Define $\theta (d) = \arctan\left(\frac{d}{D+2 - d}\right)$. We prove
\begin{lemma}
\label{lem:depth-transform}
With residual connection and layer normalization, a two-layer MLP can perform the following transformation 
\begin{align*}
    (d_i/i,1/i) \mapsto \mathbf{d}_i = (\cos(\theta(d_i)), \sin(\theta(d_i)))
\end{align*}
while keeping $\bt_i$, $o_i$, $p_i$ unchanged.
\end{lemma}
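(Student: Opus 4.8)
The plan is to exhibit an explicit two-layer MLP, interleaved with the residual connection and the layer-normalization that the Transformer block already provides, and to verify it realizes the claimed map. First I would observe what layer normalization buys us: given a vector, it subtracts the mean and divides by the standard deviation, so applied to a two-dimensional vector of the form $(u,v)$ it returns (up to an affine rescaling that can be absorbed into the subsequent linear layer) the unit vector $(u-v, v-u)/\sqrt{2}\,|u-v|$ — more usefully, applied to a vector whose two coordinates we control, it normalizes an arbitrary pair $(\alpha,\beta)$ to $(\alpha,\beta)/\sqrt{\alpha^2+\beta^2}$ after recentering. So the strategy is to use the first linear layer plus ReLU plus residual addition to produce, in two designated coordinates, the pair $\bigl(d_i,\ D+2-d_i\bigr)$ (or a positive scalar multiple thereof, e.g. $\bigl(d_i/i,\ (D+2)/i - d_i/i\bigr)$, which is exactly what the inputs $d_i/i$ and $1/i$ let us form by a linear combination), and then let layer normalization turn that pair into
\[
\frac{1}{\sqrt{d_i^2 + (D+2-d_i)^2}}\,\bigl(d_i,\ D+2-d_i\bigr)
= \bigl(\sin(\theta(d_i)),\ \cos(\theta(d_i))\bigr),
\]
since $\tan\theta(d) = d/(D+2-d)$ and both $d$ and $D+2-d$ are nonnegative for $d\in\{0,\dots,D+1\}$, placing $\theta(d)$ in the first quadrant where sine and cosine are recovered unambiguously from the normalized vector. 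A final linear layer permutes these two coordinates into the order $(\cos\theta(d_i),\sin\theta(d_i))$ and zeros out any scratch coordinates.

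The main steps, in order, are: (1) describe the coordinate layout of $\ba_{i,1}$ and reserve two scratch coordinates; (2) write the first affine map so that, using the available entries $d_i/i$ and $1/i$, the scratch coordinates hold $d_i/i$ and $((D+2)-d_i)/i$ — note this is a genuinely linear function of the inputs, and since $d_i\ge 0$ and $D+2-d_i\ge 1>0$ on the relevant range, the ReLU in between is inactive on these two coordinates (or we route around it), so nothing is clipped; (3) invoke the residual connection if needed to preserve $\bt_i,o_i,p_i$ untouched — the cleanest bookkeeping is to have the MLP output zero on those coordinates so the residual passes them through verbatim; (4) apply layer normalization to the two scratch coordinates (formally, one restricts LN to act blockwise, or one checks that the mean/variance over just these coordinates gives the stated normalization, absorbing the recentering shift into the preceding bias), obtaining $(\sin\theta(d_i),\cos\theta(d_i))$ up to the known affine LN rescaling; (5) apply the second linear layer to swap the order and rescale, yielding $\bd_i=(\cos\theta(d_i),\sin\theta(d_i))$, and confirm $\bt_i,o_i,p_i$ emerge unchanged. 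One should also record the elementary fact, used later, that $\theta$ is strictly increasing on $\{0,\dots,D+1\}$ so the $D+2$ possible values of $\bd_i$ are distinct unit vectors, with $\bd_i\cdot\bd_j$ strictly below $1$ whenever $d_i\ne d_j$; a crude bound on the angular gap $\theta(d+1)-\theta(d)$ then gives the quantitative $1 - 1/(10D^2)$ separation quoted in the main text.

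The main obstacle is reconciling the idealized "layer normalization = normalize this 2-vector" picture with the actual definition of LN, which centers by subtracting the coordinate mean and scales by the (population) standard deviation over all coordinates of the vector, not just two of them, and then applies a learnable elementwise affine transform. I expect to handle this in one of two standard ways, and the plan is to spell out whichever is cleaner: either (a) argue that since $\bt_i,o_i,p_i$ are held fixed and known, their contribution to the mean and variance is a fixed affine function of the two scratch coordinates, which can be inverted by the preceding affine layer and the following affine layer so that the net effect on the scratch block is exactly $v\mapsto v/\|v\|$ after recentering; or (b) use the common convention (and the one implicitly in play in constructions of this type) that LN is applied per-block, so it acts on the two scratch coordinates alone. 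In the two-coordinate case the only subtlety left is that LN first subtracts the mean $\tfrac12(d_i/i + ((D+2)-d_i)/i) = \tfrac{D+2}{2i}$, producing $\pm\tfrac{1}{2i}(2d_i-(D+2))$, then divides by the standard deviation $\tfrac{1}{2i}|2d_i-(D+2)|$ — which would collapse the magnitude information — so in fact I would set the scratch coordinates to the already-centered, suitably scaled pair and rely on LN's variance normalization plus the post-LN affine map to land on $(\sin\theta(d_i),\cos\theta(d_i))$; getting this affine bookkeeping exactly right, including checking that the single affine map works simultaneously for all $D+2$ values of $d_i$, is the one place where care is needed, though no deep idea is required.
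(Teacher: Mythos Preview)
Your high-level plan is exactly the paper's: form the pair $\bigl(d_i/i,\ (D+2-d_i)/i\bigr)$ by a linear map and let layer normalization collapse it to $(\sin\theta(d_i),\cos\theta(d_i))$, with the residual connection restoring $\bt_i,o_i,p_i$. You also correctly flag the one real obstacle, namely that LN centers and scales over \emph{all} coordinates. But neither of your proposed fixes actually closes the gap.

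Option~(b) you yourself rule out: LN on a two-dimensional block subtracts the common mean $\tfrac{D+2}{2i}$ and leaves $\pm\tfrac{1}{2i}(2d_i-(D+2))$, which normalizes to a fixed $\pm(1,-1)/\sqrt2$ independent of $d_i$ --- all angle information is destroyed. Your fallback (``set the scratch coordinates to the already-centered pair'') has the same defect: any centered two-vector is of the form $(c,-c)$ and normalizes to a constant. Option~(a) fails too, and not only for the mean: $\bt_i,o_i,p_i$ vary with $i$, so even if you arrange the affine layer to cancel their contribution to the \emph{mean}, their contribution to the \emph{variance} still enters $\sigma$, and $(s_1-\mu)/\sigma$ is then not $\sin\theta(d_i)$ times a constant independent of $i$. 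A single post-LN affine map cannot repair this for all $i$ simultaneously.

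The paper resolves both issues with two small tricks you are missing. First, the linear map \emph{before} LN zeros out the $\bt_i,o_i,p_i$ coordinates entirely (they are recovered afterward by the residual connection, which adds back the untouched input); this removes them from both the mean and the variance. Second, instead of two scratch coordinates it uses four, in the antisymmetric pattern $(-a,-b,a,b)$ with $a=d_i/i$ and $b=(D+2-d_i)/i$. The full vector then has mean exactly zero, so LN is pure division by $\sigma\propto \sqrt{a^2+b^2}/i$, yielding a fixed scalar multiple of $(-\sin\theta,-\cos\theta,\sin\theta,\cos\theta)$. A ReLU kills the two negative copies, the residual connection then restores $(\bt_i,o_i,p_i,d_i/i,1/i)$ in the first block, and one more linear layer reshuffles to the final layout. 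Once you insert these two devices, the rest of your write-up goes through unchanged.
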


\begin{proof}
Consider the following series of operations.
\begin{align*}
    &~\left(\bt_i, o_i, p_i, \frac{d_i}{i},\frac{1}{i}, 0, 0\right)\\
    \mapsto  &~\left(\mathbf{0}, 0, 0, -\frac{d_i}{i}, \frac{d_i-D-2}{i}, \frac{d_i}{i}, \frac{D+2 - d_i}{i}\right)\\
    \mapsto &~\left(\mathbf{0}, 0, 0,-\frac{1}{2}\sin(\theta(d_i)), -\frac{1}{2}\cos(\theta(d_i)),\right.\\
    &~~~\left.\frac{1}{2}\sin(\theta(d_i)), \frac{1}{2}\cos(\theta(d_i))\right)\\
    \mapsto &~\left(\mathbf{0}, 0, 0,0, 0,\frac{1}{2}\sin(\theta(d_i)), \frac{1}{2}\cos(\theta(d_i))\right)\\
    \mapsto &~\left(\bt_i, o_i, p_i, \frac{d_i}{i},\frac{1}{i}, \frac{1}{2}\sin(\theta(d_i)), \frac{1}{2}\cos(\theta(d_i))\right)\\
    \mapsto &~\left(\bt_i, o_i, p_i, \cos(\theta(d_i)), \sin(\theta(d_i)), 0, 0)\right)
\end{align*}
The first steps can be achieved with a linear transformation, the second step can be achieved by layer normalization and the third step follows from the ReLU activation gate, the fourth step comes from the residual connection and the last step can be obtained with an extra layer of MLP. We conclude the proof here.
\end{proof}

\subsection{Second Layer FFN}
\label{subsec:app-secondlayerffn}

We can choose between $k$ open brackets and the matched close bracket, with the exception on a few boundary cases:
(1) The depth of the current bracket reaches the maximum;
(2) The length of the sequence is about to reach the maximum.
Let $\tilde{\mathbf{m}}_{i}$ be the bracket type of the matched bracket at position $i$, we implement the last layer as follow.
\begin{align*}
    \by_i =&~ [o_i, \mathbf{z}_i, \bar{\mathbf{z}}_i]\\
    o_i = &~ \neg (\mathbf{d}_{i_1}  = \sin(\theta(D))) \wedge \neg (\mathbf{d}_{i_1} =  \sin(\theta(\tilde{D}))) \\
    \tilde{D} =&~\min\{n - i, D+1\}\\
    \mathbf{z}_i =&~ \neg (\mathbf{d}_{i_1} = 0) \wedge \tilde{\mathbf{m}}_{i}\\
    \bar{\mathbf{z}}_i = &~ \mathbf{1} -  \mathbf{z}_i.
\end{align*}

We elaborate on a few details here. 
(1) We can derive the term $\sin(\theta(\tilde{D}))$ via the similar method in Lemma~\ref{lem:depth-transform}. 
(2) Since $|\sin(\theta(i)) - \sin(\theta(j))| = \Omega\left(\frac{1}{D^2}\right)$ holds for any $i \neq j \in \{0, 1, \cdots,D+1\}$, we know that the input gap (i.e. the constant $c$ in Lemma~\ref{lem:gate}) for of all three \textsc{equal} gates  is at least $\Omega\left(\frac{1}{d^2}\right)$. Thus we can apply Lemma~\ref{lem:gate}.
(3) We can obtain $n-i$ by either augmenting the position encoding with $n$ and $i$, or normalizing $(i/n, 1-i/n)$ (see Lemma~\ref{lem:depth-transform}).

\paragraph{Output mechanism}
The final output is determined by on $V\by_{T+2}$, where $V \in \R^{2k \times 2\lceil \log k\rceil + 1}$ satisfies  $V_{i, 1} = 0$ and $V_{i, 1:}$ is the binary encoding of the $i$-th close bracket and its complement when $i \in \{1, \cdots, k\}$; $V_{i, 1} = \lceil \log k \rceil$ and $V_{i, j} = 0$ when $i \leq \{k+1, \cdots, 2k\}$ and $j > 1$. Let $S \subseteq [2k]$ denote the index of valid output, we conclude that $(V\by_{T+2})_{i} = \lceil \log k\rceil$ for $i \in S$ and $(V\by_{T+2})_{i} \leq  \lceil \log k\rceil - 1$ for $i \notin S$.

\ifdefined\isacl
Our construction can also be adapted to recognition task. 

\begin{corollary}
\label{cor:lg2-app}
For all $k, D \in \mathbb{N}^{+}$, there exists a 3-layer soft-attention network that can generate $\D$. It uses future positional masking, positional encoding of form $i/n$ for position $i$, $O(\log k)$ memory size per layer, and $O(\log n)$ precision where $n$ is the input length. The feed-forward networks use residual connection and layer normalization. 
\end{corollary}

\else

\subsection{Extension to Recognition task}
Our construction can be adapted to recognition task with some extra efforts. For recognition task, we construct a three-layer Transformer.

\begin{corollary}
\label{cor:lg2-app}
For all $k, D \in \mathbb{N}^{+}$, there exists a 3-layer soft-attention network that can generate $\D$. It uses future positional masking, positional encoding of form $i/n$ for position $i$, $O(\log k)$ memory size per layer, and $O(\log n)$ precision where $n$ is the input length. The feed-forward networks use residual connection and layer normalization. 
\end{corollary}

\begin{proof}
For any input $w_{1:n} \in \gamma \sigma^{\star}\omega$, the representation at position $i$ of layer $\ell$ has six parts $\bx_{i, \ell} = [\bt_{i}, o_i, \mathbf{p}_i, m_{i, \ell}, e_{i, \ell}, \mathbf{d}_{i, \ell}]$ which are bracket type embedding $\bt_i$, bracket openness bit $o_i$, position embedding $\bp_i$, the matching bit $m_{i, \ell}$, the error bit $e_{i, \ell}$, the depth information $\mathbf{d}_{i}$.

The first attention layer is identical to the recognition task and we obtain $d_i / i$. 
There are two difference in the upcoming FNN layer for the recognition task.
First, we need to make sure $d_i \in \{0, 1, \cdots, D\}$. Hence, our first step is to clip this value.
Taking 
\begin{align*}
d_{i}^{(1)} = &~ \max\left\{\frac{d_{i}}{i} + \frac{1}{i}, 0\right\}\\
d_{i}^{(2)} = &~\max \left\{\frac{D+2}{i} - d_{i}^{(1)}, 0\right\}\\
d_{i}^{(3)} = &~ \max\left\{\frac{D+2}{i} -d_{i}^{(2)},  0\right\} := \frac{\tilde{d}_i}{i},
\end{align*}
we have that
\begin{align*}
    \tilde{d}_{i} = \left\{
    \begin{matrix}
    0 & d_{i} < 0\\
    d_{i}+1 & d_{i}\in \{0, 1, \cdots, D\}\\
    D+2 &  d_{i} \geq D+1.
    \end{matrix}
    \right.
\end{align*}
Using the transformation in Lemma~\ref{lem:depth-transform}, we turn this into $(\mathbf{d}_{i_1}, \mathbf{d}_{i_2}) = (\sin(\theta(\tilde{d}_{i})), \cos(\theta(\tilde{d}_{i}))$. We make use of the \textsc{equal} gate and set 
\[
e_{i} = (\mathbf{d}_{i_1} = 1) \vee (\mathbf{d}_{i_1} = 0).
\]

Second, for recognition task, we match a close bracket at position $i$ to the closest position $j$ that has an open bracket and satisfies $d_{i-1} = d_{j}$. To get representation of $(\sin(\theta(d_{i-1})), \cos(\theta(d_{i-1})))$, we first obtain $\mathbf{d}_{i_3}\cdots \mathbf{d}_{i_6} = (\sin(\theta(d_{i} -1 )), \cos(\theta(d_{i}-1)), \sin(\theta(d_{i} +1 )), \cos(\theta(d_{i}+1))$ and set
\begin{align*}
    \mathbf{d}_{i_7} = &~ \max\{\mathbf{d}_{i_3} - (1- o_{i}), 0\} + \{\mathbf{d}_{i_5} -o_i, 0\}\\
    \mathbf{d}_{i_8} = &~ \max\{\mathbf{d}_{i_4} - (1 -o_{i}), 0\} + \{\mathbf{d}_{i_5} -o_i, 0\}
\end{align*}

The second self-attention layer has a depth matching attention head $\mathsf{Att}^{\mathsf{match}}$, with query, key, value vector as $Q\bx_{i} = [20D^2 \cdot \mathbf{d}_{i-1}, 1, 1]\in \R^4$, $K\bx_i = [\mathbf{d}_i, p_i, o_i]\in \R^4$ and $V\bx_i = \bx_i$ so that the attention score
\begin{align*}
   & \langle Q\bx_i, K\bx_{j} \rangle =  20D^2 \bd_{i-1} \cdot \bd_j + j/n + 2o_j \\
   & \begin{cases}
        = 20D^2 + 2 + j/n   & d_{i-1} = d_j, o_j = 1 \\
       \le 20D^2 + 1  & \text{otherwise}
   \end{cases}
\end{align*}

It achieves its maximum when $w_j$ ($j < i$) is the open bracket closest to $w_i$ with depth $d_j = d_{i-1}$.
The attention output is $\ba_i = [\ba_i^{\id}, \ba_{i}^{\mathsf{match}}] = [\bx_i, \bx_j]$ where $j =\max\{j < i | d_{i-1} =d_j \vee o_j = 1\}$.
With this attention output, we can easily check whether two brackets are matched, and we use the final layer to segregate the error bits and matching bits. This part is identical to the construction in Theorem~\ref{thm:lg1}.

\end{proof}

\fi

\ifdefined\isacl

We can extend the construction in Section~\ref{subsec:2} to generate language $\mathsf{Dyck}_k$. Our construction bypass the lower bound in \citet{hahn2020theoretical} since the layer normalization operation is not constant Lipschitz (it can be $O(n)$ in the proof).
\begin{theorem}[Soft-attention, $\mathsf{Dyck}_k$ generation]
\label{thm:lg3}
For all $k\in \mathbb{N}^{+}$, there exists a 2-layer soft-attention network that can generate $\mathsf{Dyck}_k$. It uses future positional masking,  $O(\log k)$ memory size per layer, and $O(\log n)$ precision where $n$ is the input length. The feed-forward networks use residual connection and layer normalization. 
\end{theorem}

The construction is similar and due to space limits, we only outline the difference here.
(1) We need position encoding $i/n^3$ instead of $i/n$, and add an extra position encoding of $n$.
(2) For the first FNN, we replace $D$ with $n$. In particular, for Lemma~\ref{lem:depth-transform}, we need an extra input of $n/i$, this can be derived with either an extra attention head or an extra position encoding.
(3) For the second FNN, we make some adjustment to the input of the \textsc{equal} gate, since the gap between two input could be very small (i.e. $O(1/n^2)$). Nevertheless, we can use the same trick of Lemma~\ref{lem:depth-transform} to amplify the gap between two input $a, b$ to be of order $\Omega(1)$, the later one suffices to our purpose.
\else

\subsection{Extension to $\mathsf{Dyck}_k$} 
\label{subsec:app-dyckk}
We extend the above construction to recognize language $\mathsf{Dyck}_k$. Our construction bypasses the lower bound in \citet{hahn2020theoretical} since the layer normalization operation is not constant Lipschitz (it can be $O(n)$ in the proof). Our result is formally stated below, we present the detailed construction for completeness.

\begin{theorem}[Soft-attention, $\mathsf{Dyck}_k$ generation]
\label{thm:lg3}
For all $k\in \mathbb{N}^{+}$, there exists a 2-layer soft-attention network that can generate $\mathsf{Dyck}_k$. It uses future positional masking,  $O(\log k)$ memory size per layer, and $O(\log n)$ precision where $n$ is the input length. The feed-forward networks use residual connection and layer normalization. 
\end{theorem}

\paragraph{Representation}
The representation at position $i$, layer $\ell$ has four parts $\bx_{i, \ell} = [\bt_i, o_i, \mathbf{p}_i, \bd_{i, \ell}]$. with bracket type embedding $\bt_i$, bracket openness bit $o_i$, position encoding $\mathbf{p}_i$ and depth information $\bd_{i, \ell} \in \R^3$.
The position encoding contains three parts: $\mathbf{p}_{i_1} = i/n^3$, $\mathbf{p}_{i_2} = i/n$  and $\mathbf{p}_{i_3} = n$.
For convenience of stating this construction, we assume ifuture position masking heads tokens can also attend to itself, i.e. $j>i$ is masked for $i$.

\paragraph{First Layer -- Depth Counting}
The first self-attention layer has four heads, where an $\att^{\id}$ head is used to inherit $\bt_i, o_i, \bp_i$, and a future position masking head $\att^{\mathsf{d}}$ aims to count depth with $Q\bx_i = K\bx_i = 1$ and $V\bx_i = 2 o_i - 1$, resulting in uniform attention scores and attention output $a^{d}_i = \sum_{j \le i} \frac{1}{i} \cdot (2 o_j -1) = d(w_{1:i})/i$.
The third and fourth attention heads aim to reveals position information and they generate $n/i$ and $1/i$ separately.
We set $Q\bx_i = K\bx_i = 1$ and $V\bx_i = n$ (resp. $V\bx_i = 1$), resulting in uniform attention scores and attention output $n/i$ (resp. $1/i$).

Our next step is to transform
\begin{align*}
   d_i / i &\mapsto \bd_i = (\cos(\theta(d_i)), \sin(\theta(d_i)))
\end{align*}
where $\theta(d) = \arctan\left(\frac{d}{n+2-d}\right)$ has an unique value for every $d \in \{0, \cdots, n+1\}$, so that
\begin{align*}
   \bd_i \cdot \bd_j
   \begin{cases}
        = 1  & d_i = d_j \\
       < 1 - \frac{1}{10n^2} & d_i \neq d_j
   \end{cases}
\end{align*}

This step can be done similarly as Lemma~\ref{lem:depth-transform} by replacing $D$ with $n$, this can be done because we have $n/i$ after the first attention layer.
The representation by the end of first layer is $\bx_{i, 1} = [\bt_i, o_i, \mathbf{p}_i, \bd_{i}]$.

\paragraph{Second layer -- Depth Matching} The second self-attention layer has a depth matching hard-attention head $\att^{\mathsf{match}}$, with query, key, value vectors as
$Q \bx_{i} = [20\bd_i, 1, 2] \in \R^4$,
$K \bx_{i} = [\bd_i, \mathbf{p}_{i_1}, o_i] \in \R^4$,
$V \bx_{i} = \bx_{i}$, 
so that attention score 
\begin{align*}
    \langle Q\bx_i, K\bx_{j} &\rangle =  \bd_i \cdot \bd_j + j/n^3 + 2o_j \\
   & \begin{cases}
        = 20  + j/n^3 + 2  & d_i = d_j, o_j = 1 \\
       \le 20 - 1/n^2 + 2  & \text{otherwise}
   \end{cases}
\end{align*}
would achieve its maximum when $w_j$ ($ j \le i$) is the open bracket (or start token) closest to $w_i$ with $d_j = d_i$. So the attention output is $\ba_i = [\ba^{\id}_i, \ba^{\mathsf{match}}_i] = [\bx_i, \bx_j]$ where $j = \max \{ j\le i | d_i=d_j \wedge o_j=1\}$.

In the second FNN, we choose to generate among $k$ open brackets and a matched close bracket, with the exceptions on some boundary cases.
Let $\tilde{\mathbf{m}}_{i}$ be the bracket type of the matched bracket at position $i$, we implement the last layer as follow.
\begin{align*}
    \by_i =&~ [o_i, \mathbf{z}_i, \bar{\mathbf{z}}_i]\\
    o_i = &~ \neg (\mathbf{d}_{i_1} =  \sin(\theta(n - i))) \\
    \mathbf{z}_i =&~ \neg (\mathbf{d}_{i_1} = 0) \wedge \tilde{\mathbf{m}}_{i}\\
    \bar{\mathbf{z}}_i = &~ \mathbf{1} -  \mathbf{z}_i
\end{align*}
We elaborate on some details here.  
First, we can obtain $(\sin(\theta(n - i)), \cos(\theta(n - i)))$ by applying the normalization trick in Lemma~\ref{lem:depth-transform} on input $(1 - \mathbf{p}_{i_2}, \mathbf{p}_{i_2}) = (1 - \frac{i}{n}, \frac{i}{n})$.
Second, there are two \textsc{equal} gates in the above construction. The input gap for these gates, however, can be as small as $O(1/n^2)$. Hence we can not directly apply Lemma~\ref{lem:gate}. Fortunately, we also know that the input gap is at least $\Omega\left(\frac{1}{n^2}\right)$. Denote $(a - b)_{+} = \max\{a - b, 0\}$ and take $x_1 = (a - b)_{+}, x_2 = (\frac{1}{n^2} - (a - b)_{+})_{+}, x_3 = (b - a)_{+}, x_4 = (\frac{1}{n^2} -(b - a)_{+})_{+}$. Let $(\tilde{x}_1, \tilde{x}_2)$ be the the normalized version of $(x_1, x_2)$ and $(\tilde{x}_3, \tilde{x}_4)$ be the normalized version of $(x_3, x_4)$, this can realized similarly as Lemma~\ref{lem:depth-transform}. Then we know that if $a = b$, then $\tilde{x}_1 = \tilde{x}_3 = 0$. On the contrary, if $|a - b| \geq \Omega(1/n^2)$, $\max\{\tilde{x}_1, \tilde{x}_3\} \geq \Omega(1)$. Hence we can set $\textsc{equal}(a, b) = \textsc{equal}(\tilde{x}_1, 0) \wedge \textsc{equal} (\tilde{x}_3, 0)$. This concludes the construction.

The final output is determined by on $V\by_{T+2}$, where $V \in \R^{2k \times 2\lceil \log k\rceil + 1}$ satisfies  $V_{i, 1} = 0$ and $V_{i, 1:}$ is the binary encoding of the $i$-th close bracket and its complement when $i \in \{1, \cdots, k\}$; $V_{i, 1} = \lceil \log k \rceil$ and $V_{i, j} = 0$ when $i \leq \{k+1, \cdots, 2k\}$ and $j > 1$. Let $S \subseteq [2k]$ denote the index of valid output, we conclude that $(V\by_{T+2})_{i} = \lceil \log k\rceil$ for $i \in S$ and $(V\by_{T+2})_{i} \leq  \lceil \log k\rceil - 1$ for $i \notin S$. We conclude the construction here.

\fi

\section{Theoretical limits for finite position encoding}
\label{sec:hard}

We prove that a Transformer with finite precision can not recognize $\mathsf{Dyck}_{k, D}$ language. 
In fact, we show a stronger result: no transformer with $o(\log n)$ precision can recognize $\mathsf{Dyck}_{k, D}$ language of length more than $n$.

\begin{theorem}[Formal statement of Theorem~\ref{thm:lower}]
\label{thm:lower-app}
For any $k \in \mathbb{N}$, using hard attention, no transformer with $o(\log n)$ encoding precision can recognize $\Dyck_{k, 2}$ language with input length $n$.
\end{theorem}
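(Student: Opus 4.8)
The plan is to derive a contradiction via a counting/pigeonhole argument: if precision is $o(\log n)$, then the total number of distinct internal states the network can produce over all positions is too small to distinguish certain pairs of $\Dyck_{k,2}$ strings that must be classified differently. First I would fix $k$ (taking $k=2$ suffices, say with bracket types ``('' and ``['') and consider inputs of length $n$. With precision $p(n) = o(\log n)$ bits per scalar and a fixed memory size $d_{\md}=O(1)$ per layer over $L=O(1)$ layers, each token's representation at any layer lies in a set of size at most $2^{O(p(n))} = 2^{o(\log n)} = n^{o(1)}$. The key structural fact about hard attention is that the output at position $i$ in layer $\ell+1$ is a \emph{function only of} the representations at position $i$ and at the (at most a constant number of) argmax positions selected by each head; so the final decision $g_\theta(w_{1:n})$ is determined by a bounded-depth ``computation tree'' whose nodes are layer-$\ell$ representations.

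The core of the argument is to build a family of hard inputs. I would take strings of the form $w = \gamma\, \sigma_1 \sigma_2 \cdots \sigma_m\, \tau_m \cdots \tau_2 \tau_1\, \omega$ where each $\sigma_j \in \{\ob_1, \ob_2\}$ is an open bracket and each $\tau_j$ is a close bracket, with $m \approx n/2$; such a string has depth exactly $2$ only if we interleave appropriately — actually, more carefully, I would use nesting depth $2$ by taking blocks $\ob_{a_j} \ob_{b_j} \cb_{b_j} \cb_{a_j}$ or similar, so that the matching structure is forced and the string is in $\Dyck_{k,2}$ iff each close bracket's type equals its partner's type. The point is that there are $2^{\Theta(n)}$ such ``templates'' but the behaviour of the network on the close-bracket region, and in particular the value fed to the decoder at the last token, must depend on the entire type-sequence $(a_1,\dots,a_m)$ to decide membership. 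Since each intermediate representation takes only $n^{o(1)}$ values and the computation tree feeding the final output has $O(1)$ leaves, the final output can take at most $n^{o(1)}$ values — far fewer than the $2^{\Theta(n)}$ distinct required answers / distinguishing behaviours, so two templates that should receive different labels get identified, contradiction. To make the ``must depend on everything'' step rigorous I would argue by an adversary/fooling-set construction: if the network cannot read position $j$'s type into the computation feeding the decision, flip $a_j$ to a mismatched type, changing membership while leaving the decision unchanged.

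I expect the main obstacle to be making precise the claim that the decision genuinely requires distinguishing exponentially many inputs \emph{through a bounded-width bottleneck}, i.e. controlling how information flows through hard attention. The subtlety (and the reason the authors say ``several technical improvements are necessary'' over \citet{hahn2020theoretical}) is that a single hard-attention head at the last token could, in principle, ``select'' exactly the one position that witnesses a mismatch — so the naive information-bottleneck count at a single layer is not immediately a contradiction. The fix is to iterate: the \emph{key} used for that selection is itself a function of a bounded number of lower-layer representations, each of $n^{o(1)}$ possibilities, so unrolling the $O(1)$ layers gives a decision function depending on only $n^{o(1)}$ bits of genuine input-dependent information, while a careful fooling-set of $\Dyck_{k,2}$ strings of length $n$ requires $\omega(\log n)$ bits (indeed $\Theta(n)$) to be classified correctly. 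I would therefore structure the proof as: (i) formalize the ``constant-size computation tree'' for hard-attention outputs; (ii) bound the number of reachable final outputs by $n^{o(1)}$ using the precision bound at every node; (iii) exhibit a set of $n^{\Omega(1)}$ (in fact $2^{\Omega(n)}$) strings in/near $\Dyck_{k,2}$ pairwise requiring different classifications; (iv) conclude by pigeonhole that two get the same internal trace hence the same label, contradiction.
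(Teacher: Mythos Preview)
Your proposal has a genuine gap at the pigeonhole step, and the ``constant-size computation tree'' picture that underlies it does not hold in the form you need. It is true that with $o(\log n)$ precision each representation $\bx_{i,\ell}$ lies in a set of size $n^{o(1)}$, and hence so does the final output $\bx_{n,L}$. But your steps (iii)--(iv) do not follow: classification is binary, so no three strings can ``pairwise require different classifications,'' and a function with a two-element range can still depend on every input bit. The pigeonhole you want would have to be on some notion of ``internal trace,'' yet for hard attention the layer-$\ell$ output at position $i$ is a function of constantly many layer-$(\ell{-}1)$ values \emph{whose positions are chosen by an $\arg\max$ over all $n$ keys}, hence by the entire input. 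Your proposed fix (``the key used for that selection is itself a function of a bounded number of lower-layer representations'') constrains only the query vector, not the $n$ keys it is compared against; the leaf set of the tree is input-dependent, which is exactly what allows a single head to locate a mismatch. Unrolling through $L$ layers does not remove this dependence.

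The paper's proof circumvents this with an adversarial \emph{restriction} argument rather than a counting argument on outputs. One shows inductively over layers (Lemma~\ref{lem:one-layer}) that by fixing only $H^{O(L)}(k+1)^{O(L^2H)}2^{O(L^2Hp)}$ positions of the input to well-aligned depth-$\le 2$ bracket blocks, every hard-attention head at every layer can be forced to attend to already-fixed positions---freezing the computation tree. The mechanism is that layer-$\ell$ inputs take at most $N_\ell=n^{o(1)}$ values, so each (head, query-value) pair has a fixed priority list of length $N_\ell$; one scans the sequence and, for each priority level in turn, plants that key value at a chosen position so that the $\arg\max$ is pinned there regardless of the unfixed positions. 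When $p=o(\log n)$ the total number of fixed positions is $o(n)$; after the restriction, $\bx_{n,L}$ depends only on the symbol at position $n$ and is therefore completely determined, yet $\Omega(n)$ positions remain free and can be completed either to a valid $\Dyck_{k,2}$ string or to one with a type error. Your ``flip $a_j$ to a mismatched type'' is precisely the right endgame---what is missing from your plan is this restriction mechanism that \emph{guarantees} some $j$ is genuinely unread.
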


Our proof is inspired by \citet{hahn2020theoretical} but with several different technique ingredient:
(1) we allow arbitrary attention masking (both future and past position masking); (2) we allow arbitrary position encoding (3) our lower bounds holds for bounded depth language $\mathsf{Dyck}_{k, D}$; (4) we provide an quantitative bound for precision in terms of input length $n$. 
In general, our lower bound is {\em incomparable} with \citet{hahn2020theoretical}, we prove a fine grained bound on the precision requirement for bounded depth language $\mathsf{Dyck}_{k, D}$, while the proof in \citet{hahn2020theoretical} applies only for language with Depth $\Omega(n)$ but allows arbitrary precision on position encoding.

The high level intuition behind our proof is that the attention head can only catch $o(n)$ input positions when we properly fix a small number of symbol in the input sequence. 
This limits the capability of a Transformer and makes it fail to recognize $\mathsf{Dyck}_{k, D}$ language.

We consider a $L$-layer transformer and assume $3H$ attention heads in total: $H$ normal attention heads, $H$ attention heads with future position masking, $H$ attention heads with past position masking. To make our hardness result general, we allow residual connection for the attention layer, and we assume the FNN can be arbitrary function defining on the attention outcome. In the proof, we would gradually fix $o(n)$ positions of the input sequence. 
We only perform the follow two kinds of assignment (1) we assign matching brackets to position $i, i+1$ where $i$ is odd; (2) we assign matching brackets (e.g., we assign `[', `(', `)', `]') to position $i, i+1, i+2, i+3$ for odd $i$. A partial assignment to the input sequence is said to be {\em well-aligned} if it follows these two rules.
Throughout the proof, we guarantee that for any $i\in [n], \ell \in [L]$, the output of the $\ell$-th layer $x_{i, \ell}$ depends only the input symbol at position $i$. 
This is clearly satisfied for $\ell = 0$, given the it is composed by position embedding and word embedding only. We gradually fix the input and conduction induction on $\ell$. 
We use $c_{\ell}$ to denote the number of positions we fixed before the $\ell$-th layer, and we use $s_{\ell}$ to denote the number of consecutive assigned blocks of the input sequence. It is clear that $s_{\ell} \leq 2c_{\ell}$.
The following Lemma is key to our analysis.
\ifdefined\isacl
Due to space limits, we omit the detailed proof. 
\else
\fi

\begin{lemma}
\label{lem:one-layer}
For any $\ell \in \{1, \cdots, L\}$, given a well-aligned partially assigned input sequence, suppose the input of $\ell$-th layer $\bx_{i, \ell-1}$ depends on the symbol at position $i$ only. Then by fixing $c_{\ell}H^2(k+1)^{O(\ell H)}2^{O(\ell Hp)}$ additional positions of the input sequence, we guarantee that the output of $\ell$-th layer $\bx_{i, \ell}$ also depends solely on the symbol at position $i$.
\end{lemma}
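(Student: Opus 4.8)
\textbf{Proof proposal for Lemma~\ref{lem:one-layer}.}

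The plan is to analyze one layer at a time and argue that, after a well-aligned partial assignment, the only way the output $\bx_{i,\ell}$ can depend on symbols other than the one at position $i$ is through the attention heads. Since $\bx_{i,\ell-1}$ already depends only on the symbol at position $i$ by hypothesis, the query $Q\bx_{i,\ell-1}$ at each head is determined by position $i$'s symbol. The attention output at head $h$ is $V\bx_{j,\ell-1}$ for the (hard-)argmax key $j$, so the whole question reduces to controlling, for each position $i$ and each head, which position $j$ is selected as a function of the still-free input symbols. With $o(\log n)$ precision, each $\bx_{j,\ell-1}$ takes one of at most $(k+1)^{O(\ell H)}2^{O(\ell Hp)}$ distinct values (a bound on the number of reachable internal states after $\ell$ layers with $p$ bits per scalar and $O(\ell H)$ relevant scalars), so the keys $K\bx_{j,\ell-1}$ partition the $n$ positions into a bounded number $M := (k+1)^{O(\ell H)}2^{O(\ell Hp)}$ of ``key-classes.''

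The first step is to make the argmax insensitive to free positions. Fix $i$; for the head in question, among the positions $j$ allowed by the masking, the argmax over $\langle Q\bx_{i,\ell-1}, K\bx_{j,\ell-1}\rangle$ is determined by the key-class of $j$ together with tie-breaking by position. Within each key-class the inner product is the same, so ties are broken by (say) the smallest or largest eligible $j$; hence for each of the $M$ key-classes there is at most one ``candidate'' position that could be the argmax. I would fix all symbols at these candidate positions (there are at most $M$ of them per head per anchor $i$, but crucially they are shared structure — the candidates depend only on which key-classes survive, not on $i$ in a way that blows up linearly). More carefully, since there are $H$ heads of each of $3$ types and $s_\ell \le 2c_\ell$ assigned blocks creating $O(c_\ell)$ ``intervals'' of free positions, for each interval and each head the extremal eligible free position in each key-class is a bounded set; fixing those symbols (keeping the assignment well-aligned, i.e. in matching pairs or matching 4-blocks) removes them from the free pool. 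This accounts for the $c_\ell H^2$ factor and the $(k+1)^{O(\ell H)}2^{O(\ell Hp)}$ factor in the bound.

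The second step is the induction bookkeeping: after fixing these positions we must re-verify that for every remaining free position $i$, \emph{all} $3H$ heads select a position $j$ whose symbol is fixed, so that $V\bx_{j,\ell-1}$ — and therefore $\ba_{i,\ell}$ and $\bx_{i,\ell} = F_\ell(\ba_{i,\ell})$ — depends only on the symbol at position $i$. One has to be careful that fixing new positions can change which key-classes are nonempty in some interval, but since we fixed the extremal free representative of \emph{every} key-class in every interval, the new argmax for any free $i$ must land on a fixed position (its own position via the identity head, or a fixed candidate). The counter $c_{\ell+1}$ is then $c_\ell$ plus the number of newly fixed positions, which is the claimed product; $s_{\ell+1} \le 2c_{\ell+1}$ is immediate since each fixing step adds at most one new block per fixed pair/4-tuple.

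The main obstacle I expect is the careful combinatorial accounting in the first step: ensuring that the set of positions we need to fix to ``freeze'' all argmaxes is genuinely of size $c_\ell \cdot \mathrm{poly}(H) \cdot (k+1)^{O(\ell H)}2^{O(\ell Hp)}$ and not, say, quadratic in $c_\ell$ or exponential in $\ell$ in a worse way — this requires exploiting that the per-layer state space is bounded (finite precision) so that keys only take $M$ values, and that masking only ever restricts to prefixes/suffixes so "extremal eligible position in a key-class" is well-defined and there are only $O(1)$ of them per interval per head. A secondary subtlety is maintaining well-alignedness: every time we fix a symbol we must fix its matching partner(s) too, which at most doubles (for pairs) or quadruples (for 4-blocks) the count and adds at most that many new blocks, so it only changes constants. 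Once Lemma~\ref{lem:one-layer} is in hand, iterating it over $\ell = 1,\dots,L$ with $c_1 = O(1)$ gives $c_{L+1} = (k+1)^{O(LH)}2^{O(LHp)} \cdot H^{O(L)}$, which is $o(n)$ whenever $p = o(\log n)$, and then a standard adversarial argument (two well-aligned completions of the fixed prefix that differ on depth-$2$ structure, only one of which is in $\Dyck_{k,2}$, but which the Transformer cannot distinguish) finishes Theorem~\ref{thm:lower-app}.
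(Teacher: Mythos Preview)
Your high-level plan is sound --- bound the number of reachable internal states $N_\ell \le (k+1)^{O(\ell H)}2^{O(\ell Hp)}$, then fix enough positions so that every hard-attention argmax lands on an already-fixed position --- and this is exactly the skeleton of the paper's argument. But your specific mechanism has a real gap, not just an accounting gap.

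The phrase ``extremal eligible free position in each key-class'' is not well-defined: a \emph{free} position has no key-class yet, because $K\bx_{j,\ell-1}$ depends on the still-unassigned symbol at $j$. If you instead mean ``plant a representative of each key-class at an extremal free position of each interval,'' the one-shot per-interval scheme still fails to force the argmax onto a fixed position. Take a single long free interval $I$ and a masking head that attends to $j<i$. If you plant your $M$ representatives at the right end of $I$, then for a free $i$ in the middle of $I$ all of your plants are masked out; the argmax over $j<i$ can land on any free position, which an adversary can assign to the top-priority class. Planting at both ends does not help: a free $j$ strictly between the left-end plants and $i$ can still realize the top-priority class and beat every fixed position. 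So the claim in your second step --- ``since we fixed the extremal free representative of every key-class in every interval, the new argmax for any free $i$ must land on a fixed position'' --- is false as stated.

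The paper closes this gap with a different, sequential mechanism. For each pair (head $h$, query-value $w$) it orders the $N_\ell$ possible key-values into a priority list $\mathcal{L}_{h,w}$ and processes them from highest to lowest. For the current priority $w_\tau$ it scans positions from one end and either \emph{plants} $w_\tau$ (assigning symbols at a parity-compatible slot so the assignment stays well-aligned) or, when planting would violate well-alignedness, \emph{eliminates} that position by assigning an arbitrary matched pair and continues scanning. A progress marker $i_\tau$ records how far the scan has reached; the invariant maintained is that beyond $i_\tau$ the head with query $w$ either attends to an already-fixed position or to something of priority strictly below $w_\tau$. After all $N_\ell$ priorities are exhausted, every free position's argmax for that $(h,w)$ is determined. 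The counting tracks the number $s$ of contiguous assigned blocks, which increases by at most one per planting step, so $s \le c_\ell + HN_\ell^2$; each step assigns $O(s)$ positions (the eliminations near block boundaries), giving $\sum_{h,w,\tau} O(s) \lesssim c_\ell H N_\ell^2 + H^2 N_\ell^4$ total. That --- not a per-interval count --- is the origin of the $c_\ell H^2 (k+1)^{O(\ell H)}2^{O(\ell Hp)}$ bound in the lemma.
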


\ifdefined\isacl
\else
\begin{proof}
We first perform some calculations on the total number of possible value for input $\bx_{i, \ell-1}$ and we denote this number as $N_{\ell}$. 
For any $m \in [\ell]$, we have the following recursion rule. 
The total number of possible input value for $m$-th layer is $N_{m}$, and therefore, each head in the attention layer can take value in at most $N_{m}$ numbers.
Given the output of attention layer depends only on the input $\bx_{i, m}$ and the outcome of $3H$ attention head, the total number of possible output is at most $N_{m}^{3H+1}$, i.e. $N_{m+1} \leq N_{m}^{3H+1}$.
Moreover, when $\ell = 1$, the number of position encoding is $2^p$ and the number of possible symbol is $2k+2$. Therefore, one has $N_{1} = (2k+2)2^p$, and thus, $N_{\ell} \leq (k+1)^{O(\ell H)}2^{O(\ell Hp)}$

Given a well-aligned partially assigned input sequence, such that the input of $\ell$-th layer $\bx_{i, \ell-1}$ depends on the symbol at position $i$ only. We are going to restrict some additional positions in the input sequence, such that the output of $\ell$-th layer $\bx_{i, \ell}$ depends solely on the symbol at position $i$. It suffices to make the outcome of each attention head to depends solely on the symbol. 

We focus on the backward head and other type of attention head can be treated similarly. Recall the total number of possible input of $\ell$-th layer is at most $N_\ell$, for $h \in [H]$ and $w \in [N_{\ell}]$, we use $\mathcal{L}_{\ell, h, w}$ to denote the priority list of head $h$ on input $w$, i.e. $\mathcal{L}_{h, w}$ sorts element in $[N_m]$ by its attention score with $w$ in descending order (we break ties arbitrarily).
Below, we show how to fix $O(\cdot)$ positions of the input sequence such that the attention head $h$ only attends to fixed positions and its outcome sole depends on $w$.

Consider the following assignment procedure.
Initially, we start from the highest priority element in $\mathcal{L}_{h, w}$ and start from the beginning of the sequence (i.e. position $0$).
We gradually clear the list and move towards.
Let $w_{\tau}$ be of the top priority element in the current list $\mathcal{L}_{h, w}$ and let $i_{\tau}$ be the current position ($\tau \in [N_m]$, $i \in [n]$).

Consider all positions $j$ later than $i$ (i.e. $j \in [n], i\leq j$) whose input symbol have not been assigned and there exists an assignment such that $\bx_{j, \ell-1} = w_{\tau}$. For simplicity, let us assume we can realize $\bx_{j, \ell-1} = w_{\tau}$ by assigning an open bracket, this is WLOG. We scan through all these indices $j$, from the last to the first, and do 
\begin{enumerate}
    \item If $j$ is in an odd position, then we realize $\bx_{j, \ell-1} = w_{\tau}$ by assigning the open bracket, then assign the same type of close bracket at position $j + 1$. Stop and pop $u_{\tau}$ from $\mathcal{L}_{h, w}$.
    \item If $j$ is in an even position, and at the same time, position $j + 1, j+2$ has no assignment. Then we assign two pairs of matched brackets to position $j - 1, j, j+1, j+2$, fulfilling the requirement of $\bx_{j, \ell-1} = w_{\tau}$. Stop and pop $u_{\tau}$ from $\mathcal{L}_{h, w}$
    \item If $j$ is in an even position, and at the same time, the position $j - 2$ has no assignment and can be assigned to satisfy $\bx_{j-2, \ell-1} = w_{\tau}$. Then we assign two pairs of matched brackets to position $j -3, j-2, j-1, j$, fulfilling the requirement of $\bx_{j-2, \ell-1} = w_{\tau}$. Stop and pop $u_{\tau}$ from $\mathcal{L}_{h, w}$.
    \item Otherwise, we just assign a random pair of matched bracket to position $j-1$ and $j$.
\end{enumerate}

We set $i_{\tau+1}$ to be the place where we stop, and if there is no such place, we just set $i_{\tau+1} =i_\tau$.
Let's clarify what we achieve after this step. 
After this step, the attention outcome for head $h$ and input $w$ is fixed before position $i_{\tau+1}$ has all been fixed.
While after position $i_{\tau+1}$, the attention head $h$ for input $w$ either attend to position that has already been fixed, or attends to a position with priority less that $u_{\tau}$.

We next count the number of positions we assigned.
Let $s_{\ell, h, w, \tau}$ ($\tau \in [N_\ell], h\in [H], \tau \in [N_{\ell}]$) be the number of consecutive assigned slots. We know that initially $s_{\ell} \leq  c_{\ell}$. 
Our assignment procedure guarantees that the number of slots grows at most one at a time, and therefore, $s_{\ell, H, N_\ell, N_\ell} \leq c_{\ell} + HN_\ell^2$. 
Furthermore, we make assignment to at most $2s_{\ell, h, w, \tau} + 2$ positions each time. 
Thus the total number of position we assigned is bounded as
\begin{align*}
\sum_{h, w, \tau} s_{\ell, h, w, \tau}  \lesssim &~ c_{\ell} \cdot O(HN_\ell^2) + O(H^2N_\ell^4)  \\
\lesssim &~ c_{\ell}H^2(k+1)^{O(\ell H)}2^{O(\ell Hp)}.
\end{align*}
We plug in $N_{\ell} \leq (k+1)^{O(\ell H)}2^{O(\ell Hp)}$ in the last step.

Finally, we remark that our partial assignment is still well-aligned and after these additional assignments, the output of $\ell$-th layer $\bx_{\ell, i}$ depends solely on the symbol at position $i$.\qedhere

\end{proof}
\fi

\begin{proof}[Proof of Theorem~\ref{thm:lower-app}]

We apply Lemma~\ref{lem:one-layer} and compute the number of positions $c_{L+1}$ we need to restrict, in order to guarantee that the output of $L$-th layer $x_{i, L+1}$ depends only on the input at position ($i \in [n]$).
Since $c_{\ell +1} \leq  c_{\ell}H^2(k+1)^{O(\ell H)}2^{O(\ell Hp)}$ and $c_{1} = O(1)$, we have
\[
c_{L + 1} \lesssim H^{O(L)} (k+1)^{O(L^2H)} 2^{O(L^2Hp)}.
\]
By taking 
\[
H^{O(L)} (k+1)^{O(L^2H)} 2^{O(L^2Hp)} \leq 0.01n.
\]
We know the partial assigned sequence is well-aligned, has depth at most two, and the number of assignment is only $0.01$.
Thus, we assert that that when $p = o(\log n)$, the output of Transformer is completely determined by the partial assignment and it do not detect whether there exists error in the unassigned positions and thus can not recognize $\mathsf{Dyck}_{k,2}$ language. We conclude the proof here.

\end{proof}

\section{Experiment Details}
\label{sec:app-experiment}

\subsection{Setup}
\label{subsec:app-setup}
\paragraph{Data}

We follow \citet{hewitt2020rnns} to generate $\D$ by randomly sampling stack decisions (push, pop, or end) and maintaining length conditions (\tbl{tab:length}) for a $O(D^2)$ hitting time of different DFA states. The number of tokens for train, validation, and test set is $2\times 10^6, 2\times 10^5$, $10^6$ respectively. 

\begin{table}[ht]
\resizebox{\columnwidth}{!}{
    \centering
    \begin{tabular}{c|c|c|c|c}
        $D$ & 3 & 5 & 10 & 15 \\ \hline
        Train/val lengths & 1:84 & 1:180 & 1:700 & 1:1620 \\
        Test lengths & 85:168 & 181:360 & 701:1400 & 1621:3240 \\
    \end{tabular}
}
    \caption{Input lengths for $\D$ with different $D$.}
    \label{tab:length}
\end{table}

\paragraph{Models} We use the LSTM model implemented in \citet{hewitt2020rnns}. For Transformer models, we turn off all drop outs as we find them to hurt performance greatly. We also use only 1 head as we find more heads to hurt performance. We use Adam optimizer with initial learning rate being 0.01 or 0.001, and choose the better learning rate in terms of validation accuracy for each experiment. We train for at most 100 epochs but allow early stopping if the validation loss converges. 

\paragraph{Metric} We follow \citet{hewitt2020rnns} and use the accuracy of correct close bracket predictions:
\begin{align*}
    p(\cb_j | \cb) = \frac{p(\cb_j)}{\sum_{i}{p(\cb_i)}}
\end{align*}
Let $p_l$ be the empirical probability that the model confidently predicts a close bracket (defined as $p(\cb_j | \cb) > .8$), conditioned on it being separated from its open bracket by $l$ tokens. Unlike \citet{hewitt2020rnns} where $\text{mean}_{l} p_l$ is reported, we report $\mathbb{E}_{l} p_l$ for two reasons: (i) when $l$ is large $p_l$ might be only defined by one trail, thus $\text{mean}_{l} p_l$ amplifies the randomness; (ii) the findings remain similar with either metrics.

\subsection{More Results}
\label{subsec:app-results}

In \fig{fig:pe_dev}, we show the validation performance for Transformers of different positional encoding schemes. They all reach near-perfect accuracy when having at least 2 layers.

In \fig{fig:language}, we break down the results in Section~\ref{subsec:more_results} when $d_{\md} \in \{10, 30, 50\}$. We also add results for a five-layer Transformer, which performs similarly as the two-layer Transformer. This shows (i) a two-layer Transformer, as suggested by our theory, is enough to process $\D$, and (ii) Transformers with more layers can also learn to process $\D$ without overfitting or degraded performance.

\begin{figure}[ht]
    \centering
    \includegraphics[width=.4\textwidth]{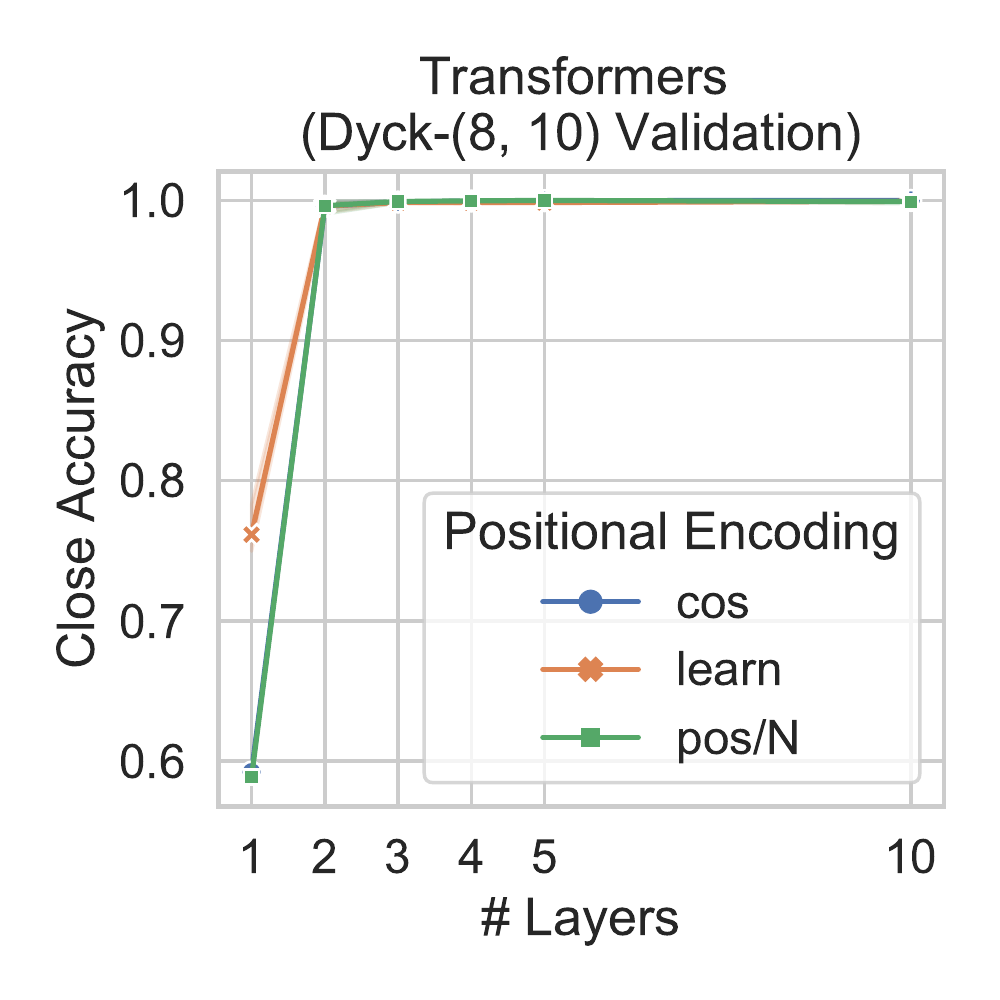}
    \caption{Validation results on $\Dyck_{8, 10}$.}
    \label{fig:pe_dev}
\end{figure}

\begin{figure}[ht]
    \centering
    \includegraphics[width=.5\textwidth]{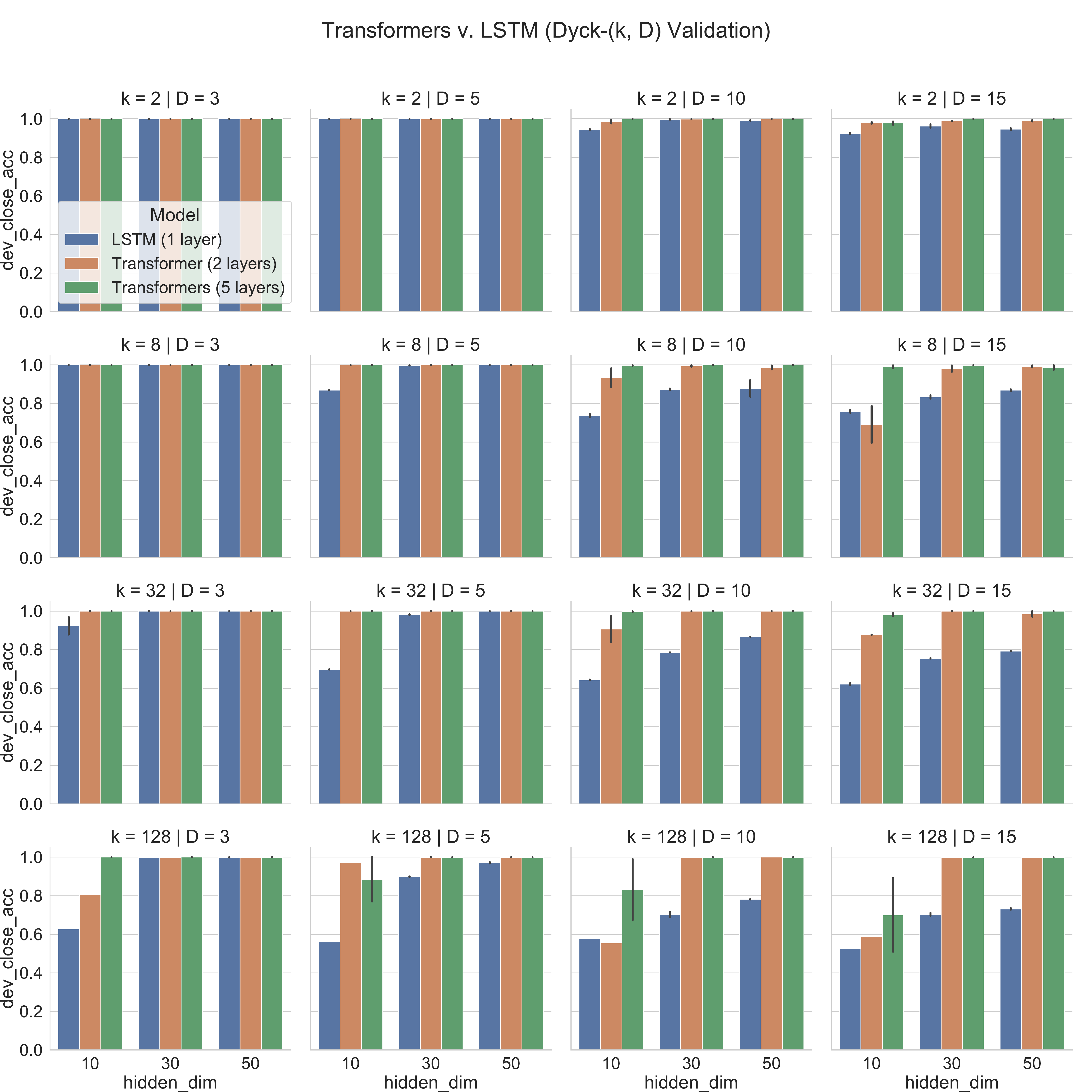} 
    \includegraphics[width=.5\textwidth]{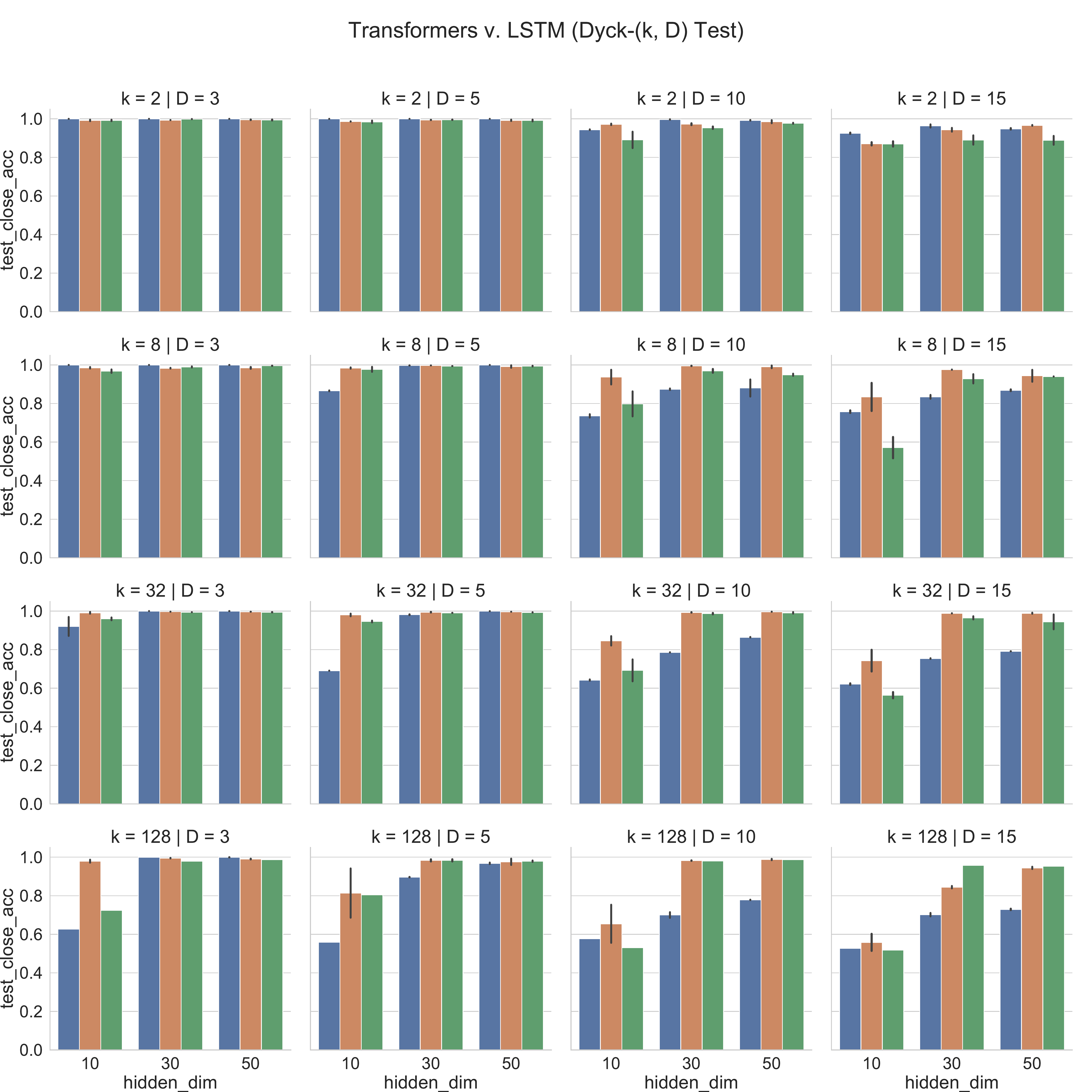}
    \caption{Validation and test results on $\D$ ($k \in \{2, 8, 32, 128\}$ and $D \in \{3, 5, 10, 15\}$). Enlarge for details.}
    \label{fig:language}
\end{figure}

\end{document}